\author{Shuang~Gao,~\IEEEmembership{Member,~IEEE}, and Peter E. Caines,~\IEEEmembership{Life Fellow,~IEEE}\quad (Version: August 5, 2022)
\thanks{Shuang Gao and Peter E. Caines are with the Department of Electrical and Computer Engineering, McGill University,
   Montreal, QC, Canada \hspace{1cm}
         {\tt\small    $\{$sgao,peterc$\}$@cim.mcgill.ca}.}  
}%
\newcommand{\BR}{ \operatorname{R}}  
\newcommand{\BN}{\mathds{N}}
\newcommand*\TRANS{{\mathpalette\doTRANS\empty}}
\newcommand*\doTRANS[2]{\raisebox{\depth}{$\m@th#1\intercal$}}
\begin{document}
%
\title{Transmission Neural Networks:\\ From  Virus Spread Models to Neural Networks}
%
%
%
\maketitle 
\begin{abstract}
This work connects models for virus spread on networks with their equivalent neural network representations. 
Based on this connection, we propose a new neural network architecture, called Transmission Neural Networks (TransNNs) where activation functions are primarily associated with links and are allowed to have different activation levels. 
Furthermore, this connection leads to the discovery and the derivation of three new activation functions with  tunable or trainable parameters.  
Moreover, we prove that TransNNs with a single hidden layer and a fixed non-zero bias term  are universal function approximators. Finally, we present new fundamental derivations of continuous time  epidemic network models based on TransNNs.
\end{abstract}

\begin{IEEEkeywords}
	Epidemic models, virus spread models, neural networks, neuronal networks, trainable activation functions, network SIS models.
\end{IEEEkeywords}

\section{Introduction}
\subsection{Motivation and Introduction}
Epidemic models are important in understanding epidemic spread dynamics and designing strategies to control virus spread. Researchers have proposed a variety of epidemic models and devoted much effort to analyze these models (see  \cite{pastor2015epidemic,nowzari2016analysis,pare2020modeling} 
and the references therein), but there is still a lack of satisfactory fundamental understandings and derivations of these models on heterogeneous transmission networks. %
On a seemingly unrelated topic,  neural network models as universal function approximators are very successful in fitting input-output data relations in practice (see e.g. \cite{lecun2015deep,goodfellow2016deep}); however, fundamental understandings and interpretations of parameters, models and their outputs are still under ongoing research efforts. 
This work studies the virus spread process on networks based on first principles analysis  and establishes its equivalent neural network representation. This equivalence (a) leads to a useful representation of epidemic models via neural networks and (b)  improves our  fundamental understandings of neural network models. Although the starting point is the virus spread model,  the model can be adjusted to characterize other problems with similar  dynamics, including computer virus infections, rumour's spread, neuronal excitation propagation, among others.  One salient feature of such dynamics is that viruses (or rumours, or neuronal excitations) replicate themselves on nodes of transmission networks and no conservation of flow is required, which make them fundamentally different from power transmission networks and  flow networks.%

 Transmission Neural Networks (TransNNs) are proposed in this paper to  represent the neural network models with learnable parameters where 1) the underlying connections among nodes of the network represent independent transitions of particles\footnote{Depending on the application contexts, the particles in this paper may be interpreted as  neurotransmitters, viruses (or droplets), passengers, etc.} from one node to another;  
2) the effectiveness of transitions of particles are assumed to be independently random with probability weights;
 3) the transmission of particles changes the probability of the nodal state (which is  binary) following  virus spread dynamics.

The virus spread dynamics models in this paper contain two types: (I) single virus particle transmission and (II)  multiple virus particle transmission.  
 Each of the two types of virus transmission models has two equivalent representations: the virus spread representation and the TransNN representation.
In the virus spread representations,  the probability of being infected is used to represent the state of each node.
 In the TransNN representations, the negative-log-negative probability (which corresponds to the Shannon information of being healthy) is used to represent the state of each node. 
\subsection{Related Literature}

The first  epidemic model with heterogeneous networks  is proposed by Lajmanovich and Yorke in \cite{lajmanovich1976deterministic}  where the network characterizes  transmission links and probabilities among heterogeneous populations.  
 Kephart and White analyzed the virus spread model on random directed graphs generated with homogenous edge-connection  probability in \cite{kephart1992directed}. Virus spread models on the networks characterized by degree distributions have been analyzed  in \cite{pastor2001epidemic}.  
Virus spread models characterizing the probability of infection for networks with uniform transmission probabilities on all links have been proposed  and used for identifying threshold values for epidemics (to die out) in the work \cite{wang2003epidemic} and its later modification and extension \cite{chakrabarti2008epidemic}.
Important mean field approximation results for virus spread models on networks have been established in \cite{van2008virus,cator2012second,van2015accuracy}, where mean field states are interpreted as approximations of the fractions of the infected in nodal populations. Another type of related models with different dynamics includes the message-passing networks in \cite{karrer2014percolation,bianconi2014multiple}  which have been utilized to study epidemic spread control \cite{altarelli2014containing} and   identify influential nodes \cite{morone2015influence}.
For an overview of other existing epidemic models, readers are referred to  \cite{pastor2015epidemic,nowzari2016analysis,pare2020modeling} and the references therein.
The virus spread models used in the current paper are essentially generalized versions of that in \cite{chakrabarti2008epidemic}  
  to network models with heterogeneous infection probabilities among links.
In such models, it is assumed that the successful transmissions of particles  (which may be interpreted as viruses, passengers or neurotransmitters depending on the problem contexts) 
are independent among different transmission links reminiscent of the conditional independence assumption in Na\"ive Bayes  \cite{webb2010naive}. 

TransNN models proposed in this paper may be used as universal function approximators and deep learning models. 
The key conceptual difference between TransNNs and standard neural network architectures (see e.g. \cite{goodfellow2016deep}) is that activation functions and their activation levels  in TransNNs  are considered as a part of links, whereas in standard neural networks (whether they are feedforward or recurrent), activation functions are typically considered as a part of nodes (or the neuron-like computing units). 

Belief networks (also known as Bayesian networks, influence diagrams, causal networks, decision networks, or relevance diagrams) model the probability distribution based on conditional dependence among attributes (see e.g. \cite{jensen1996introduction, pearl1988probabilistic}).  {These models are  different from  TransNN models,  since  belief networks represent the conditional dependence whereas networks in TransNNs represent contact structures and the probability of connections. Nevertheless, if we take time steps into considerations for TransNNs, then the conditional dependence presents itself in the forward time direction.} 

As inference and learning models for probability distributions, belief networks with directed and acyclic connections called (sigmoid) deep belief networks are proposed in \cite{neal1990learning}, and Markov networks with symmetric connections called Boltzmann machines  are studied in \cite{sherrington1975solvable,hinton1983optimal}. %
 A fast algorithm for training deep belief networks is proposed by Hinton, Osindero, and Teh in \cite{hinton2006fast} which utilizes restricted Boltzmann machines (RBM) (i.e.  Boltzmann machines restricted to bipartite graphs \cite{hinton2002training, smolensky1986information}). %
 Universal approximation properties for a specific type of deep belief networks are established in \cite{sutskever2008deep}.  
Although as inference models, TransNNs with stochastic states (or random realizations of states) for each node may  potentially be viewed as deep belief network models or Boltzmann machines,  TransNNs differ from these models in the connection functions among nodes.

 There has been a recent surge of research interests and  efforts in studying activation functions with trainable or tunable parameters and shapes (see \cite{apicella2021survey} and the references therein). The three activation functions identified in this work (called TLogSigmoid, TLogSigmoidPlus and TSoftAffine) are trainable or tunable  and differ from all the existing activation functions in the literature (see e.g. \cite{glorot2011deep,dugas2001incorporating,clevert2015fast,maas2013rectifier,apicella2021survey,berner2021modern}). Furthermore, special cases of these  activation functions can be related to the Log-Sigmoid function \cite{polyak2001log}, the ReLU function \cite{glorot2011deep} and the Softplus function \cite{dugas2001incorporating}, and special cases of their derivatives can be associated with the sigmoid function and the tanh function. 
%
%
\subsection{Contribution}
This paper proposes a new neural network architecture called Transmission Neural Networks (TransNNs),  establishes equivalent characterizations of virus spread models via TransNNs  using the negative-log-negative transformation of the probability states, and discoverers  a new set of tunable activation functions. 
Furthermore, with additional assumptions on the transmission rate with respect to the time duration,  we provide a new fundamental  derivation of the standard network Susceptible-Infected-Susceptible (SIS) model characterized by differential equations (\cite{lajmanovich1976deterministic, van2008virus,van2015accuracy}) based on TransNNs, and such a derivation (which the authors haven't been able to find in the literature) deepens our understandings of network SIS epidemic models. 

\subsection{Organization}
The paper is structured as follows.
We first present the virus spread model over deterministic effective transmission networks in Section \ref{sec:deterministic-TransNN}. Then we present the dynamics models on  probabilistic transmission networks with a single particle transmission across each link and derive the TLogSigmoid activation function in Section \ref{sec:single-probablistic-transmission}. 
Afterward, we introduce the probabilistic network dynamics with multiple particle transmission across each link in Section \ref{sec:multi-probabilistic-trans}. For each type of the virus spread dynamics, we derive the equivalent TransNN models  in their respective sections. In Section \ref{sec:Psi-Activation},  we investigate in detail the properties of three new activation function (i.e. TLogSigmoid, TLogSigmoidPlus and TSoftAffine).  In Section \ref{sec:TransNNs}, we present TransNNs in their general forms.  We then prove the  universal approximation property for feedforward TransNNs with one hidden layer in Section \ref{sec:universal-approx-TransNNs}. Finally, we present the new derivation of network SIS models based on TransNNs in Section \ref{sec:netowrk-sis-via-TransNN}.
\vspace{0.1cm}
\subsubsection*{Notation and Terminology} 
Let $[n] \triangleq \{1,2,...,n\}$ denote an ordered set. Let $\BN_0$ denote non-negative integers and $\BN$ positive integers. $\operatorname{Q}$ denotes the set of all rational numbers. For a  vector $v \in \BR^n$,
$v_i $ denotes its $i$th element. $[a_{ij}] \in \BR^{n\times n}$ denotes the matrix whose  $ij$th element is specified by $a_{ij}$ for all $i,j \in [n]$. 
We use $\exp_\circ(\cdot): \operatorname{R}^n \to \operatorname{R}^n$ to denote the point-wise exponential function defined by  $\exp_\circ(v)\triangleq [e^{v_1},...., e^{v_n}]^\TRANS$ for any $v=[v_1,..., v_n]^\TRANS \in \operatorname{R}^n $. We use ``neural networks" to refer to   artificial network models  with trainable parameters as function approximators, and ``neuronal networks'' as the networks of biological neurons. As for notations for the three new activation functions, we use (a) $\Psi$ to denote the TLogSigmoid function, (b) $\Psi_+$ to denote the TLogSigmoidPlus function, and (c) to denote $\Phi$ the TSoftAffine function.

\section{Virus Spread dynamics over effective transmission networks} \label{sec:deterministic-TransNN}
Consider a network of multiple persons where person $i \in V$ at time $k\in \BN_0$ has an actual state $x_i(k)$ in $\mathcal{X}=\{0,1\}$ where $0$ and $1$ respectively represent healthy and infected states.  
The probability a person $i$ in the infected state $1$ at time $k$ is denoted by $p_i(k)$, that is,
\[
p_i(k) = \operatorname{Pr}(x_i(k)  = 1),\quad i \in V.
\]
  An \emph{effective transmission link} from person $i$ to person $j$ is defined as the effective transmission of at least one virus from person $i$ to person $j$ that causes the infection of person $j$.  The network of nodal persons with effective transmission links are called the \emph{effective transmission network}. 
   By this definition,  individuals can only affect their immediate neighbours on the effective transmission network.   
Consider a given effective transmission network and
let $(V,E)$ be the underlying (directed) graph with $E\subset V\times V$ where $(i,j) \in E$ if there exists an effective transmission from node $j$ to node $i$, for all $i, j \in V$. Let $A=[a_{ij}]$ be the \emph{adjacency matrix} where for all $i,j \in V$, $a_{ij} =1$ if $(i,j) \in E$ and $a_{ij}=0$ otherwise.  Clearly, all the nodes on the underlying graph $(V,E)$ must have self-loops (that is, $a_{ii}=1$ for all $i\in V$), since every person can effectively transmit virus to himself or herself.   

Without loss of generality, we set $V=[n]$. 
Then the probability of node $i$ being infected at time $k+1$ satisfies 
\begin{equation}\label{eq:1st-det-dyn}
	\begin{aligned}
	(1-p_i(k+1)) &=  \prod_{j  \in N_i^\circ} (1- p_j(k)), \quad i \in [n] 
\end{aligned}
\end{equation}
{where $N_i^\circ  \triangleq  \{j: (i,j)\in E \}$ denotes the neighbourhood of node $i$ with itself included}. It is worth highlighting that the inclusion of the self-loops  is important in the correct characterization of the virus spread dynamics.

We define the following $\log$ function extended by $-\infty$: 
\begin{equation}
	\log (x) \triangleq 
	\begin{cases}
		\ln (x) ,&   x \in (0, 1];\\
		 -\infty, & x =0 .
	\end{cases}
\end{equation}
Let us define the ``\emph{negative-log-negative probability state}" of node $i \in [n]$ as 
\begin{equation}\label{eq:neg-log-neg}
	s_i(k) \triangleq - \log (1-p_i(k)) \in [0,+\infty], \quad k \in \BN_0.
\end{equation}
In information-theoretic terms, {the state $s_i(k)$ is the Shannon information  (also known as the information content or self-information) of the event $x_i(k)=0$ which happens with probability $\operatorname{Pr}(x_i(k)=0)=1-p_i(k)$.}
 The mapping in \eqref{eq:neg-log-neg} from $p_i(k)$ to $s_i(k)$ is monotone, bijective, and concave.  
{Special attention to $\log 0$ must be paid, since for  person $i$ who has been confirmed of infection at time $k\in \BN_0$, $p_i(k)=1$ and $s_i(k)\triangleq-\log(1-p_i(k)) = -\log 0$ is $+\infty$.}

  Taking logarithm and then negation on both sides of \eqref{eq:1st-det-dyn} yields the dynamics for the negative-log-negative probability state as follows:
\[
\begin{aligned}
		s_i(k+1) & = -\sum_{j\in N_i^\circ} \log(1-p_j(k))  \\
		&= \sum_{j\in N_i^\circ} s_j(k) , \quad  s_i(k)  \in [0,+\infty], ~~ k\in \BN_0.
\end{aligned}
\]
Thus, the evolution of the negative-log-negative probability states $s(k) \triangleq (s_1(k)\cdots s_n(k))^\TRANS$ satisfies the linear dynamics
\begin{equation} \label{eq:log-prob-evo}
		s(k+1) =  A s(k), ~~ k\in \BN_0.
\end{equation}
Let's take $s(k)$ as the state of the underlying epidemic spread system  \eqref{eq:1st-det-dyn} over effective transmission networks. Then the probability of infection $p(k)$ can be considered as a nonlinear observation of the underlying state $s(k)$ as follows:
\[
p_i(k)  = 1- e^{-s_i(k)} , \quad i \in [n], ~ k\in \BN_0.
\]
Since the solution to \eqref{eq:log-prob-evo} is given by $s(k) = A^k s(0), $
we obtain 
$$
p_i(k) = 1- e^{-[{A^k s(0)}]_i}, \quad i \in [n], ~ k\in \BN_0.
$$ 
Let $p(k) = [p_1(k),...,p_n(k)]^\TRANS$ and $\exp_\circ(\cdot): \operatorname{R}^n \to \operatorname{R}^n$ denote the \emph{point-wise exponential function} given by  $$\exp_\circ(v)\triangleq [e^{v_1},...., e^{v_n}]^\TRANS, \quad v\in \BR^n.$$  
Then the infection probability vector at time $k$ satisfies
$$
\begin{aligned}
p(k) & = 1 - \exp_\circ{({-A^k s(0)})}	\\
&= 1- \exp_\circ({A^k \log(1- p(0))}), ~~ k\in \BN_0,
\end{aligned}
$$   
and the one-step prediction of the infection probability satisfies  
\begin{equation}\label{eq:one-step-predict}
\begin{aligned}
		p({k+1}) & = 1-\exp_\circ(-{A s(k)}) \\
		&= 1- \exp_\circ({A \log(1- p(k))}),~  k\in \BN_0.
\end{aligned}
\end{equation}
This yields an explicit characterization of the probability of infection at time $k+1$ given the effective transmission network  and the probability of infection $p(k)$ at time $k$.

The effective transmission network  should be obtained and estimated from data. In practice, such data about the effective transmission networks can be gathered from contact tracing (based on, for instance, interviews and location-based check-in systems with scanning QR codes in monitoring the spread of Covid19).  
%
Given the confirmation of infection of some individuals, we can identify and predict nodal individuals that have high probability of infection based on the network reconstructed via contact tracing (i.e. the contact tracing network). 
 It should be noted that although such a contact tracing network  provides partial information  about the underlying effective transmission network, it is clearly not necessarily the effective transmission network. To obtain more accurate information, further infection testing over the contact tracing network is needed. However, when facilities for testing are not available, to minimize the risk of further spread of the virus, we just assume the contact tracing network is the effective transmission network, and inform or isolate nodes with high probabilities of infection. In this case, infection  probabilities computed from contact tracing networks provide upper bounds for the actual infection probabilities derived from effective transmission networks.  

\section{Probabilistic Transmission Networks:\\ Single Particle Transmissions}\label{sec:single-probablistic-transmission}

Let the underlying network represent the \emph{physical contact network}  (which is not necessarily the effective transmission network defined in the previous section), where a link between two persons on a physical contact network exists if the two persons are less than a distance $r$ for a time duration $t$  (e.g., $r=1$ meter and $t=30$s). With a slight abuse of notation, the physical contact network is denoted by $(V,E)$ with the adjacency matrix $A=[a_{ij}]$. Clearly all the nodes of $(V,E)$ must have self-loops, that is, $a_{ii}=1$ for all $i\in V$.  
For all $i,j\in V$, let $w_{ij}$ denote the probability of node $j$ infecting its neighbouring node $i$ on the physical contact network given that $j$ is infected.  

Without loss of generality, we set $V=[n]$.  
Then the probability of node $i$ being infected at time $k+1$ satisfies 
\begin{equation} \label{eq:probablistic-dynamics}
	(1-p_i(k+1)) = \prod_{j\in N_i^\circ} (1- w_{ij} p_j(k)), \quad i \in [n]
\end{equation}
where $N_i^\circ \triangleq  \{j: (i,j)\in E \}$ denotes the neighbourhood of node $i$ with itself included.
{The key modelling assumption in the virus spread dynamics \eqref{eq:probablistic-dynamics} is that, given (or conditioned on) the probabilities of infection at  all nodes, the successful transmissions of the virus are independent among links.} 
The model in \eqref{eq:probablistic-dynamics} when specialized to the case with homogenous cross-node infection probabilities (i.e. $w_{ij}=w$ for $i\neq j$) reduces to  the virus spread model in \cite{chakrabarti2008epidemic}. $w_{ii}$ can be related to  the self-healing probability (denoted by $\delta_i$) via $w_{ii} =1-\delta_i$. 
\begin{remark}[Self-Transmission and Healing Probabilities]
The self-loop weight $w_{ii}$ for node $i \in V$ relates to the self-healing probability denoted by $\delta_i$ via $w_{ii}=1-\delta_i$. 
With $w_{ii}=1$  for all nodes, we essentially have the network Susceptible-Infected (SI) epidemic model where infected nodes stay infected forever.
With $w_{ii}\in (0,1)$ for all nodes, we essentially have the network Susceptible-Infected-Susceptible (SIS) epidemic model where infected nodes can become healthy and may be infected again.
With $w_{ii} =0$ for all nodes, this corresponds to the case where  {nodes after infection heal themselves within one time step.} 
Furthermore, if $w_{ij} =0$ for $i\neq j$, we obtain that 
$
1-p_i(k+1) = 1- w_{ii} p_i(k)
$ and hence 
$
p_i(k) = w_{ii}^k p_i(0).
$
This means that the infection probability decreases geometrically over time  for $w_{ii}\in (0,1)$.  
\end{remark}

\subsection{Equivalent Representations via Neural Networks}

In the following, we present an exact model characterizing the probability evolution in \eqref{eq:probablistic-dynamics} via a neural network model.
First, taking logarithm on both sides of \eqref{eq:probablistic-dynamics} yields
\[
\begin{aligned}
	 \log (1-p_i(k+1)) &= \sum_{j\in N_i^\circ} \log( 1-w_{ij}p_j(k)), ~~i \in [n].\\  
\end{aligned}
\]
Let's introduce the input state and the output state associated with  node $i \in [n]$ as follows: for $k\in \BN_0$,
\begin{equation*}
	\begin{aligned}
		& \text{Input State}:\quad~~ s_i(k)\triangleq - \log(1-p_i(k)),  \\
		& \text{Output State}: \quad o^{h}_i(k) \triangleq - \log (1- w_{hi} p_i(k)),
	\end{aligned}
\end{equation*}
where $i,~ h \in [n].$ 
Strictly speaking the output state is not a nodal state but a link state, since the output of an individual node $i$ depends on the receiving node $h$ as well and more specifically the output state depends on the link probability $w_{hi}$. 

Then the negative-log-negative probability state of the dynamics \eqref{eq:probablistic-dynamics}  satisfies
\[
s_{i}(k+1) = \sum_{j\in N_i^\circ} a_{ij} o^{j}_i (k),  \quad i \in [n]
\]
and the relation between output state and input state is
\begin{equation}\label{eq:typeI-Phi}
	\begin{aligned}
	o^{h}_i(k) & = - \log(1- w_{hi} p_i(k))   \\
	&  = -\log \left[1- w_{hi} (1- e^{-s_i(k)} )\right] \\
	& \triangleq \Psi( w_{hi},s_i(k)), \quad  i \in [n]
\end{aligned}
\end{equation}
where  the activation function $\Psi(\cdot, \cdot)$  takes both the network weight $w_{hi}$ and the input state $s_i$ as inputs. We call the activation function $\Psi(\cdot, \cdot)$ above the \emph{tunable Log-Sigmoid (TLogSigmoid)} activation function. 
An illustration of the  TLogSigmoid activation function $\Psi(w, \cdot)$ with different activation levels specified by $w$ is shown in Fig. \ref{fig:activation-psi1}. 
This provides the flexibility in choosing the activation function with a parameter $w_{ij}$ that governs the level of signal passes at each link $(i,j)\in E$.  We postpone detailed discussions of the TLogSigmoid activation function and its variants to Section~\ref{sec:Psi-Activation}.
\begin{figure}[htb]
\centering 
\includegraphics[width=8cm, trim={3.8cm 0 3.2cm 0},clip]{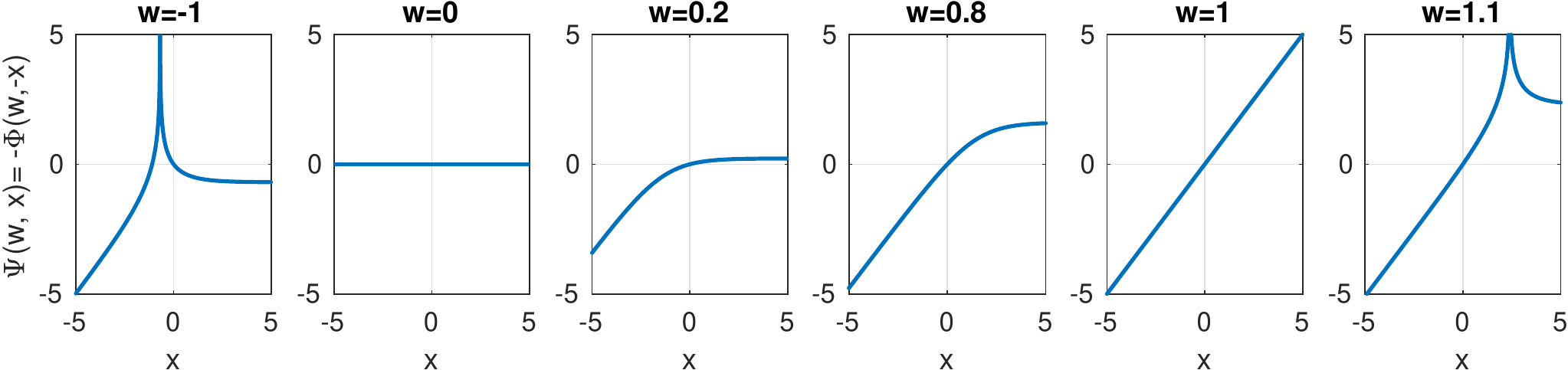}
	\caption{TLogSigmoid activation function $\Psi(w,x)$ with $w$ taking different values in $[0,1]$. When $w=0$, $\Psi(w,\cdot)$ is zero and represents no pass as illustrated by the leftmost picture; when $w=1$, $\Psi(w,\cdot)$ is linear and represents full pass  as illustrated by the rightmost picture; when $w \in(0,1)$, $\Psi(w,\cdot)$ is non-polynomial and monotonically increasing, and represents a partial pass  as illustrated by two figures in the middle. } \label{fig:activation-psi1}
\end{figure}

Therefore, the evolution of  negative-log-negative probability states is equivalently given by
\begin{equation}\label{eq:nn-individual-node-local}
	s_{i}(k+1) = \sum_{j\in N_i^\circ}  \Psi( w_{ij},s_j(k)),~~ i \in [n], ~ k\in \BN_0 
\end{equation}
where $ \Phi( w_{ij},s_j(k)) = - \log \left(1- w_{ij} + w_{ij}e^{ - s_j(k)} \right)$ and  $N_i^\circ \triangleq  \{j: (i,j)\in E \}$ denotes the neighbourhood set of node $i$ with itself included on the underlying graph $([n], E)$.
We call this model the \emph{Transmission Neural Network (TransNN) with single  particle transmissions}. 
 See Fig. \ref{fig:TransNNTypeI-psi} for an illustration of the model above. 
\begin{figure}[htb]
\centering
	\includegraphics[width=8cm]{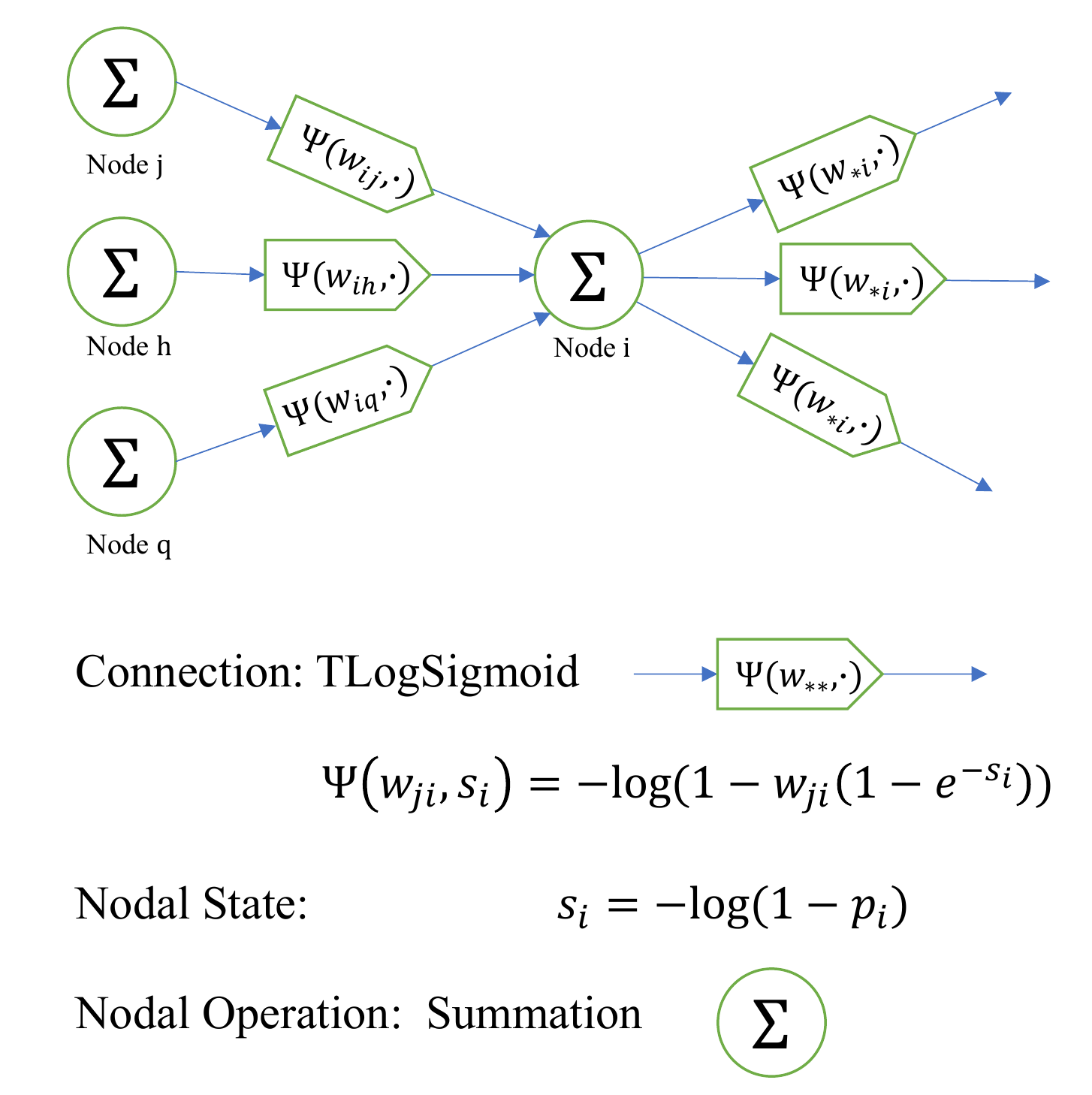}
	\caption{Illustration of TransNN representation of the virus spread network with negative-log-negative probability states and TLogSigmoid activation function. Connections among nodes are nonlinear when $w_{**}\in (0,1)$ and connections may have  different activation levels.} 
	\label{fig:TransNNTypeI-psi}
\end{figure}
We note that $a_{ij}$ in the adjacency matrix $A= [a_{ij}]$ takes value $1$ if there is a directed connection from node $j$ to node $i$ and $0$ otherwise. Hence the dynamics in \eqref{eq:nn-individual-node-local} can be equivalently written as 
\begin{equation}\label{eq:nn-individual-node}
	s_{i}(k+1) = \sum_{j=1}^n a_{ij} \Psi( w_{ij},s_j(k)), \quad i \in [n], ~k \in \BN_0 ,
\end{equation}
 where $s_i(k) \in [0,+\infty]$ for all $i\in [n]$ and all $k\in \BN_0$.

\subsection{Sufficient Condition for Virus Extinction}
 Let  $A=[a_{ij}]$ and $W=[w_{ij}]$.  Let $\odot$ denote the Hadamard product, and $\{\lambda_{i}(A\odot W)| i\in[n]\}$ denote all the eigenvalues of $A\odot W$.  
\begin{theorem}[Sufficient Condition for Virus Extinction]\label{eq:threshold}
	The virus spread characterized by \eqref{eq:probablistic-dynamics} and equivalently by \eqref{eq:nn-individual-node} will die out regardless of initial conditions if  the eigenvalues of $A\odot W$ are less than $1$ in absolute values, i.e., 
	\begin{equation}\label{eq:exp-stable}
	\max_{i\in[n]}|\lambda_{i}(A\odot W)| <1.
 \end{equation}
\end{theorem}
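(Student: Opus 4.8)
The plan is to dominate the nonlinear recursion \eqref{eq:nn-individual-node} by the \emph{linear} comparison system $s\mapsto (A\odot W)\,s$ and then invoke the spectral bound \eqref{eq:exp-stable}. The engine of the whole argument is a single elementary inequality: the activation $\Psi(w,\cdot)$ never exceeds its tangent at the origin. Concretely, for every fixed $w\in[0,1]$ and every $x\in[0,+\infty]$ I would first establish
\[
\Psi(w,x) = -\log\!\left(1-w+we^{-x}\right) \le w\,x ,
\]
with equality at $x=0$. Writing $g(x)=1-w+we^{-x}>0$, a direct computation gives $\partial_x\Psi = we^{-x}/g$ and $\partial_x^2\Psi = -w(1-w)e^{-x}/g^2 \le 0$, so $\Psi(w,\cdot)$ is concave with $\Psi(w,0)=0$ and slope $\partial_x\Psi(w,0)=w$. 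A concave function lies below its tangent line, which yields the bound (it holds trivially at $x=+\infty$, where the right-hand side is $+\infty$).

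Since $a_{ij}\ge 0$, $w_{ij}\in[0,1]$ and $s_j(k)\ge 0$, applying this bound termwise to \eqref{eq:nn-individual-node} gives the entrywise vector inequality
\[
0 \;\le\; s(k+1) \;=\; \Big[\textstyle\sum_{j=1}^{n} a_{ij}\,\Psi(w_{ij},s_j(k))\Big]_i \;\le\; (A\odot W)\,s(k).
\]
Because $A\odot W$ is nonnegative, it is order preserving on the nonnegative orthant, so iterating yields $0\le s(k)\le (A\odot W)^{k} s(0)$ for all $k\in\BN_0$. Condition \eqref{eq:exp-stable} states that the spectral radius $\max_i|\lambda_i(A\odot W)|$ is strictly below $1$, whence $(A\odot W)^{k}\to 0$ as $k\to\infty$ by Gelfand's formula. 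Squeezing gives $s(k)\to 0$, and since $p_i(k)=1-e^{-s_i(k)}$ this forces $p(k)\to 0$, i.e.\ the virus dies out independently of the initial state. The only auxiliary fact used is the standard monotonicity lemma for nonnegative matrices, which I would record in one line.

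The delicate point, and the one I expect to be the main obstacle, is the extended state space $[0,+\infty]$: if $p_i(0)=1$ for some node then $s_i(0)=+\infty$, and the crude bound $(A\odot W)^k s(0)$ is then vacuous. I would handle this by observing that \eqref{eq:exp-stable} already forces $w_{ii}<1$ for every $i$, since a nonnegative matrix satisfies $\max_i|\lambda_i(A\odot W)|\ge\max_i (A\odot W)_{ii}=\max_i w_{ii}$; and, more strongly, any directed cycle carrying only weight-$1$ edges would dominate a cyclic permutation matrix and hence force $\max_i|\lambda_i(A\odot W)|\ge 1$. Therefore the weight-$1$ edges form a directed acyclic subgraph, along which an infinite entry can be relayed for at most $n-1$ steps before some $\Psi(w_{ij},\cdot)$ with $w_{ij}<1$ caps it to a finite value. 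Thus $s(k)$ becomes finite after finitely many steps, and the comparison argument above applies from that time onward, completing the proof.
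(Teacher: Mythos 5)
Your proof is correct and follows essentially the same route as the paper's: concavity of $\Psi(w,\cdot)$ in $x$ gives the tangent-line bound $\Psi(w,x)\le wx$ at the disease-free fixed point $s=0$, the dynamics are then dominated by the linear comparison system $s\mapsto (A\odot W)s$, and the spectral condition \eqref{eq:exp-stable} forces $s(k)\to 0$ and hence $p(k)\to 0$. You are in fact more careful than the paper on two points its appendix leaves implicit: the entrywise nonnegativity of $A\odot W$ needed to iterate the comparison inequality, and the treatment of initially infinite states $s_i(0)=+\infty$ (i.e.\ $p_i(0)=1$) via the observation that the spectral condition precludes directed cycles of weight-$1$ edges.
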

See Appendix \ref{sec:threshold-proof} for the proof.
\begin{remark}[Special Case]
Specializing $W$ to the following
\[
w_{ii} =1-\delta, \quad w_{ij} = \beta, \quad \forall i, j \in [n], \text{~with~} \delta, \beta \in (0,1)
\]
yields the virus spread model in \cite{chakrabarti2008epidemic}. 
In this case  
$
A\odot W  = \beta A + I (1-\delta-\beta)
$ is symmetric with non-negative elements.
Then the inequality condition \eqref{eq:exp-stable}  reduces to 
$
\beta \lambda_{\max}(A)+1-\delta - \beta <1
$ which  is equivalent to 
$
 \lambda_{\max}(A) < \frac{\delta+\beta}{\beta} = \frac{\delta}{\beta} + 1
$ where $\lambda_{\max}(\cdot)$ denotes the largest eigenvalue. This condition is equivalent to  the condition $
 \lambda_{\max}(\tilde{A}) <\frac{\delta}{\beta}
$ established in \cite{chakrabarti2008epidemic},  
where $\tilde{A}\triangleq A -I$ is the adjacency matrix of the underlying physical contact network excluding self-loops. %
\end{remark}

\section{Probabilistic Transmission Networks: Multiple Particle Transmissions}\label{sec:multi-probabilistic-trans}

\subsection{Multiple Particle Transmission Model}

 We consider the following virus spread model with multiple particle transmissions at each link
\begin{equation} \label{eq:population-model}
	1-  p_h(k+1) = \prod_{q\in {N}_h^\circ} \Big(1- w_{_{hq}} p_q(k)\Big)^{a_{_{hq}}},~ h\in [n]
\end{equation}
where  $p_q(k)$ denotes the probability of infection at node $q$ at time $k$, $w_{_{hq}} \in[0,1]$ is the probability that each particle (e.g. virus) transmitted from node $q$ to node $h$ causes an infection at node $h$, 
and $a_{_{hq}}$ is  the number of particles (e.g. viruses) transmitted from node $q$ to node $h$, and $N_h^\circ \triangleq \{q: (h,q)\in E \}$ denotes the neighbourhood of node $h$ on the physical contact network as defined in Section \ref{sec:single-probablistic-transmission}.

We present three interpretation examples for this multiple particle transmission model: 
 (a) Micro-level epidemic spread over contact networks where multiple viruses (or droplets) are sent across a physical contact link; 
 (b) Population-level epidemic spread over transportation networks among cities (or countries) where multiple infected individuals commute over each transportation link;
 (c) Spread of neuronal excitations on neuronal networks with chemical synapses,  where multiple neurotransmitters are released from presynaptic neuron to synaptic cleft to bind to receptors at the postsynaptic neuron for each chemical synapse.

\subsubsection{Micro-Level Epidemic Spread} 
{On a micro scale, the spread of a disease may depend on  the transmission of multiple virus particles  among nodes (e.g. persons)}.
 Each infected node (e.g. person) contains a population of virus particles. At a physical contact link, a proportion of the virus population spreads from one node to another. {It is assumed that at each physical contact link, the transmission of each virus particle  is independent of those of other virus particles, and can independently infect the receiving node with the same probability.} At each contact link, there can be a large number of independent virus transmissions. Then $a_{_{hq}}$ denotes the number of independent virus transmissions from node $q$ to node $h$.

\subsubsection{Epidemics Spread among Cities}
Consider a network of cities (or countries) connected via a transportation network. Each node represents a city (or a country) that has a population. 
The \emph{effective transportation flow} $a_{_{hq}}$ is defined as the number of the infected that still travel from the source node $q$ to the target node $h$.   Then $p_q(k)$ denotes the probability state of infection at node $q$ at time $k$, and  $w_{_{hq}}$ is the probability of infecting  node $h$  by  a single infected person from node $q$.  
In practices,  the underlying effective transportation flow $[a_{_{hq}}]$ may be obtained from virus testings for passengers or approximately estimated based on passenger flows\footnote{If the detailed information regarding virus testing for passengers is not available, then we may tune a certain parameter $\rho_q \in [0,1]$ representing a proportion of the infection in the passenger flow such that the effective transportation flow is given by  $a_{_{hq}} = \rho_q T_{hq}$, where $T_{hq}$ denotes the passenger flow from city $q$ to city $h$. }.  

\subsubsection{Neuronal Networks with Chemical Synapses}
	Consider a network of biological neurons connected over  chemical synapses.  
	An illustration of a typical chemical synapse is shown in Fig.~\ref{fig:chemical-synapse}. Let's restrict our attention to the case where all neuronal connections are excitatory. 
	An excited neuron (as the presynaptic neuron) releases multiple neurotransmitters into the  synaptic cleft\footnote{The connection between two neurons may involve multiple chemical synapses. In terms of mathematical modelling, we can treat all of the synapses as one synaptic connection.}.  Some of the released neurotransmitters will bind to receptors in the postsynaptic neuron, which may lead to  the excitation of the postsynaptic neuron. For further details on the release mechanisms of neurotransmitters, their binding to receptors and their role in activating postsynaptic neurons, readers are referred to \cite[Part III]{kandel2000principles}. %
	Consider a neuron $q$ releasing $a_{_{hq}}$ number of neurotransmitters into the synaptic cleft between neuron $q$ and neuron $h$, and {assume that each neurotransmitter  in the synaptic cleft can randomly bind to a receptor with probability $w_{_{hq}}$\footnote{As excitations of neurons may be caused by different types of neurotransmitters that may differ among connections, it is suitable to have a parameter $w_{_{hq}}$ to represent differences in transmission probabilities potentially caused by different types of neurotransmitters and their receptors.} independently\footnote{Whether this is a good assumption may depend on the number of receptors and the spatial configurations of synapses. 
} from other neurotransmitters}. Then the model in \eqref{eq:population-model}  characterizes  the probabilities of neuronal excitations  on a synaptic network at a future time given the current probabilities of neuronal excitations.

\begin{figure}[htb]
\centering
	\includegraphics[width=8cm]{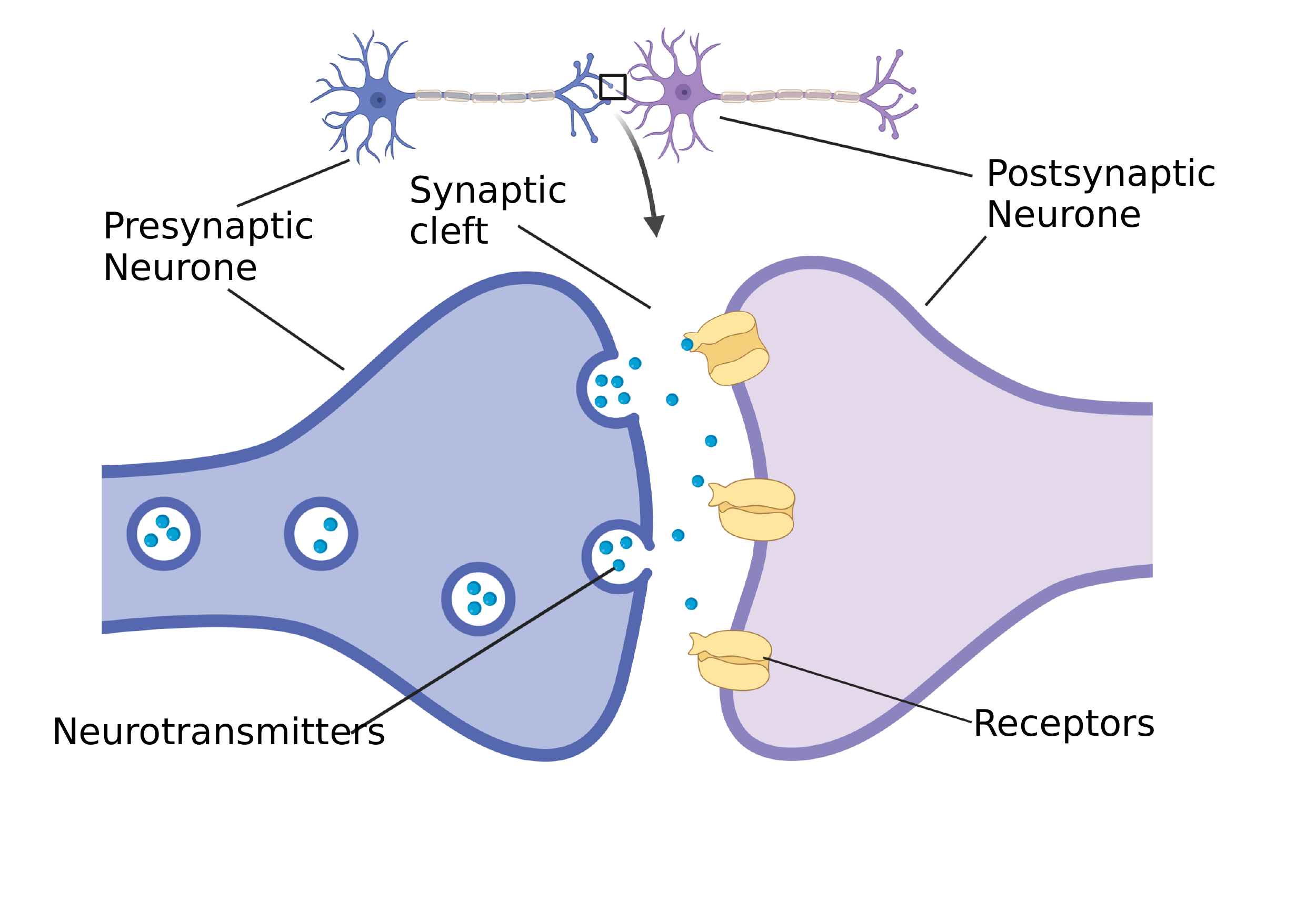}\\
	\caption{A schematic illustration of a typical chemical synapse \footnotesize{(created with BioRender.com}). } \label{fig:chemical-synapse}
\end{figure}
\begin{remark}[Integer Weights and Rational Weights]
{We note that the weight $a_{_{hq}}$ presented above is a non-negative integer. Some  relaxations are required to allow $a_{_{hq}}$ taking {rational values} to associate it to artificial neural network models with rational weights\footnote{In training artificial neural networks  in practice, weights are often rational numbers and irrational numbers are rarely used.}.  One interpretations of rational weights may be  through  \emph{neuronal twins}\footnote{Neuronal twins here are defined as neurons that are connected to the same set of other neurons within the synaptic network under consideration.}.  For example, $(\cdot )^{\frac{10}{3}}$ means that 10 transmitted particles would lead to the excitation of 1 out of the 3 postsynaptic neuronal twins. This seems to suggest that there should be a population of neuronal twins  associated to each node to enable the fractional power. By definition,  neuronal twins do not need to be  near each other physically and anatomically, but they must have the same connections.} 
Based on this interpretation, biological neuronal networks may seem to have at least two ways to adjust the rational weight $a_{hq}$: (i) the adjustment of the number of neurotransmitters at each connection and (ii) the adjustment of the number of twin neurons  that share the same connections at each node.  
\end{remark}

Salient features of the model in \eqref{eq:population-model} are  that (a) multiple particles are transmitted from one node to another and (b) at each link the successful transmissions of particles are assumed to be independent from those of other particles.  The particles can be interpreted across different scales which could represent virus particles, or infected travellers in epidemic networks, or neurotransmitters in neuronal networks.
\subsection{Equivalent Representations via Neural Networks}
Taking $\log$ on both sides of \eqref{eq:population-model}, we rewrite the  model as 
\begin{equation*}
	\begin{aligned}
	\log(1- p_h(k+1)) &=  \log\prod_{q\in  {N}_h^\circ} \Big(1- w_{_{hq}} p_q(k)\Big)^{a_{_{hq}}} \\
	 =  \sum_{q\in  {N}_h^\circ}&a_{_{hq}}\log\Big(1- w_{_{hq}} p_q(k)\Big), 
	~~ \forall h, q \in [n],
\end{aligned}
\end{equation*}
where ${N}_q^\circ$ denotes the set of neighborhood nodes of node $q$ (including itself) on the physical contact network.
Let each individual node  maintains two types of states:
\begin{equation*}
	\begin{aligned}
		& \text{Input State}:\quad~~~ s_q({k})\triangleq - \log\Big(1- p_q(k)\Big), \\
		&\text{Output State}: \quad ~o_q^{h}({k})\triangleq -\log\Big(1- w_{_{hq}} p_q(k)\Big),
	\end{aligned}
\end{equation*}
where $k\in \BN_0$ and $h,q \in [n]$. 
{We note that the output state depends on the receiving neurons as well and hence strictly speaking it is not a state of a node but a state of a connection.}
Then the relation between input state and output state is
\[
\begin{aligned}
o_q^{h}(k) = \Psi(w_{_{hq}}, s_q(k)) 
	& = - \log\left(1 - w_{_{hq}} +w_{_{hq}} e^{-s_q(k)}\right)
\end{aligned}
\]
where  $\Psi(\cdot, \cdot)$ above is the TLogSigmoid activation function defined in \eqref{eq:typeI-Phi}. 
Then the dynamics in \eqref{eq:population-model} are equivalent to 
\begin{equation}\label{eq:pop-nn}
	s_h({k+1}) = \sum_{h=1}^na_{_{hq}} \Psi(w_{_{hq}}, s_q(k)), 
	~~ \forall h, q \in [n],
\end{equation}
where  $s_q(k) \in [0,+\infty]$ for all $q\in [n]$ and $k\in \BN_0$, and $a_{_{hq}}$ is the network weight representing the number of particles transmitted from $q$ to $h$ (and clearly  $a_{_{hq}}=0$ if node $h$ and $q$ are not neighbors on the physical contact network).  We call this model the \emph{Transmission Neural Network (TransNN) with multiple particle transmissions}.  
Following the same proof of Theorem \ref{eq:threshold}, we obtain that  $\lim_{k\to \infty} p(k) =0$ and $\lim_{k\to \infty} s(k) =0$ regardless of initial conditions if  $\max_{i\in[n]}|\lambda_{i}(A\odot W)| <1$.

We note that if $w_{_{hq}}=1$ then $o_q^{h}(k) = s_q(k)$ and if $w_{_{hq}} =0$ then $o_q^{h}(k)=0$. 
The activation level of $\Psi(w, \cdot)$ is governed by $w$ as illustrated by Fig.~\ref{fig:activation-psi1}.  The parameter $w$ characterizes the activation levels of individual neural connections, from no pass (with parameter $0$), to partial pass (with parameters in $(0,1)$) and  to full pass (with parameter $1$). See Fig.~\ref{fig:activation-psi1}.  %

{In biological synaptic networks, inhibitory neurons (external to the nodes of the synaptic networks under consideration) and inhibitory neurotransmitters may alter $w_{_{hq}}$ (i.e. the inhibition level of neural link from $q$ to $h$).}
Furthermore, the change of $w_{_{hq}}$ for a group of neurons may potentially be used to model the function of neuromodulators that alters  effective synaptic strengths \cite{marder2012neuromodulation}.

\subsection{Modulating Activation Levels}
Activation levels for links in TransNNs may be modified or modulated globally or nodally.%
\subsubsection{Global Modulation}
A global influence $\gamma \in [0,1]$ can be introduced to simultaneously modulate the activation  level of all nodes in TransNNs,  that is, 
\[
w_{_{hq}} = \gamma c_{_{hq}}, \quad \forall  h, q \in [n],
\]
where $c_{_{hq}} \in [0,1]$, $ h, q \in [n]$, are some fixed probabilities inherent to the TransNN system.

\subsubsection{Dual Nodal Modulation}
In chemical synapses, both presynaptic and postsynaptic neurons can be modulated  to change  the probability of neurotransmitters binding to receptors or the amount of released neurotransmitters. %
 Motivated by these, we introduce the presynaptic modulation $\beta_q \in [0,1]$ and postsynaptic modulation $\alpha_h \in  [0,1]$ in TransNNs as extra variables that modify the effective transmission probability as follows:
\[
w_{_{hq}} = \alpha_h c_{_{hq}} \beta_q, \quad \forall  h, q \in [n], 
\]
where $c_{_{hq}} \in [0,1]$, $ h, q \in [n]$, are some fixed probabilities inherent to the TransNN system.

\section{The TLogSigmoid Activation Function and Its Variants}\label{sec:Psi-Activation}
Recall the definition of the  TLogSigmoid activation function denoted by $\Psi(\cdot, \cdot)$ in \eqref{eq:typeI-Phi} as
\begin{equation}
	\Psi(w, x) = -\log \left(1- w + we^{-x} \right),
\end{equation}
where $w \in [0,1]$ and $x\in [-\infty, +\infty]$. 
 The shape of the TLogSigmoid activation function  is illustrated in Fig.~\ref{fig:activation-psi1}. 
When the tunable parameter $w=0.5$, the TLogSigmoid activation function denoted by $\Psi(w,x)$ is related to the sigmoid function $\sigma(x)= (1+e^{-x})^{-1}$ as follows: for $x\in \BR$,
\[ \sigma(x) = 0.5 e^{\Psi(0.5, x)}
 ~~\text{and}~~ \Psi(0.5,x) = \log(2\sigma(x)).
\]
Log-Sigmoid type functions have been used in the literature and in practice:  the  function $2\log(2\sigma(x)) =2\Psi(0.5,x)$ has been used in constrained optimization in \cite{polyak2001log}; the function $\log(\sigma (x))$ has been recently implemented as an activation function in standard machine learning packages (e.g. PyTorch and TensorFlow).  Salient features of the TLogSigmoid function that distinct itself from these existing Log-Sigmoid type functions are  the tunable parameter $w$, and non-negative parts of the TLogSigmoid function. 
\subsection{Derivatives of TLogSigmoid $\Psi(w,x)$ with Respect to $w$}
The partial derivative of $\Psi(w, x)$ with respect to $w \in (0,1)$ is given by
\begin{equation} \label{eq:psi-1stgradient}
	\partial_w{\Psi({w}, x)}= \frac{1- e^{-x}}{1- w +w e^{-x} } = (1-e^{-x}){e^{\Psi(w,x)}}.
\end{equation}
{The denominator $1+w(e^{-x} -1)$ is non-zero for any $x\in[-\infty, +\infty)$ and any ${w\in (0,1)}$.   The derivative above is $+\infty$ when $x=+\infty$ and $w\neq 0$. Moreover
%
%
it  
has interesting properties: 
for $w\in (0,1)$, 
\[
\begin{aligned}
	& \partial_w{\Psi({w}, x)}=0,\quad \text{ when $x=0$};\\
	& \partial_w{\Psi({w}, x)}<0,\quad \text{ when $x< 0$;}\\
	& \partial_w{\Psi({w}, x)}>0,\quad \text{ when $x> 0$}.
\end{aligned}
\]
The partial derivative of $\partial_w\Psi(w, x)$ can be conveniently integrated into  gradient backpropagation or automatic differentiation \cite{bucker2006automatic} when TransNN models \eqref{eq:nn-individual-node} and \eqref{eq:pop-nn} (after a slight generalization) are trained as function approximators. 
Interestingly, the partial derivative of $\Psi(\cdot, \cdot)$ with respect to $w$ is related to the tanh function $\tanh(x) = \frac{e^{x}-e^{-x}}{e^{x}+e^{-x}}$  as follows: 
\begin{equation*}
\begin{aligned}
	& \partial_w \Psi(0.5, x ) = 2\tanh (0.5 x),\quad \forall x\in [-\infty, +\infty].
\end{aligned}
\end{equation*}

Higher order partial derivatives of $\Psi(w,x)$ with respect to $w$ are  explicitly given as follows: for $w\in (0,1)$ and $k\geq 1$,
\[
\begin{aligned}
	{\partial^k_w }{\Psi({w}, x)} & = (k-1)!{(1-e^x)^k}{e^{k\Psi(w, x)}},   \quad  x\in [-\infty, +\infty]. 
\end{aligned}
\]
We note that ${\partial^k_w }{\Psi({w}, 0)}=0$ for any $w\in [0,1]$. Furthermore,  for any $x\in \BR$, $\Psi(w,x)$ is convex in $w \in[0,1]$, since  ${\partial^2_w }{\Psi({w}, x)}$  is always non-negative. 

\subsection{Derivatives of TLogSigmoid $\Psi(w,x)$ with Respect to $x$}
The partial derivative of $\Psi(w,x)$ with respect to $x \in [-\infty, +\infty]$ satisfies
\begin{equation}\label{eq:psi-grad-x}
	\partial_x \Psi(w, x) = \frac{w e^{-x}}{1 - w +w e^{-x}} = {w e^{-x}}{e^{\Psi(w, x)}} \geq 0
\end{equation}
for any $w\in[0,1]$.
If $x\in [0,+\infty]$, then
$\partial_x \Psi(w, x)$ is monotonically increasing in $w\in[0,1]$. Thus,  setting a small $w$ makes $\Phi(w, x)$  less sensitive to variations of the signal input $x\in [0,+\infty]$, which, in other words,  increases the robustness of $\Phi(w, x)$ with respect to $x$.
We note that $\partial_x \Psi(w, x)$ is related to  the sigmoid function $\sigma(x)=({1+e^{-x}})^{-1}$  as follows: %
\begin{equation*}
\begin{aligned}
	&\partial_x \Psi (0.5, - x) = \sigma(x), \quad \forall x\in [-\infty, +\infty].\\
\end{aligned}
\end{equation*}
{The 2nd order partial derivative of $\Psi(w,x)$ with respect to $x$ satisfies: for any $w\in[0,1]$ and any $x\in [-\infty, +\infty]$,
\begin{equation}\label{eq:2nd-Psi-grad-w}
	{\partial^2_x }\Psi(w,x) = (-1) \partial_x \Psi(w, x)(1- \partial_x \Psi(w, x))\leq 0. 
\end{equation}
Thus for any $w\in[0,1]$, $\Psi(w,x)$ is concave in $x \in [-\infty, +\infty]$. 
More generally, higher order partial derivatives of $\Psi(w,x)$ with respect to $x \in[-\infty, +\infty]$ are explicitly given as follows: for  $ n \geq 2$ and any $w\in[0,1]$,
\begin{equation}\label{eq:high-grad-psi-form}
		{\partial^n_x } \Psi(w,x)   = \sum_{k=1}^{n} (-1)^{^{k+n}} (k-1)! S_{n, k}(\partial_x \Psi(w, {x}))^k
\end{equation}
where $S_{n,k}$  denotes the Stirling numbers of the second kind (see e.g. \cite[Chapter 6.1]{graham1989concrete}) 
and
	$
	\partial_x \Psi(w, x)  
$ is given by \eqref{eq:psi-grad-x}. See Appendix \ref{sec:high-grad-phi} for the derivation of \eqref{eq:high-grad-psi-form}.}

\subsection{Variants of TLogSigmoid}
\subsubsection{Tunable Log-Sigmoid-Plus Activation  $\Psi_{+}(w, x)$}
In the virus spread model,  the input $x$ of $\Psi(w,x)$ as the negative-log-negative probability state is always non-negative. Thus the activation function used is essentially the following: 
\begin{equation}\label{eq:PsiPlus}
\Psi_{+}(w, x)  \triangleq \begin{cases}
	  - \log \left(1- w + we^{-x} \right),& x\geq 0 \\
	 0, & x<0
\end{cases}
\end{equation}
for $w\in [0,1]$. 
\begin{figure}[htb]
\centering 
\includegraphics[width=8cm,trim={3.8cm 0 3.2cm 0},clip]{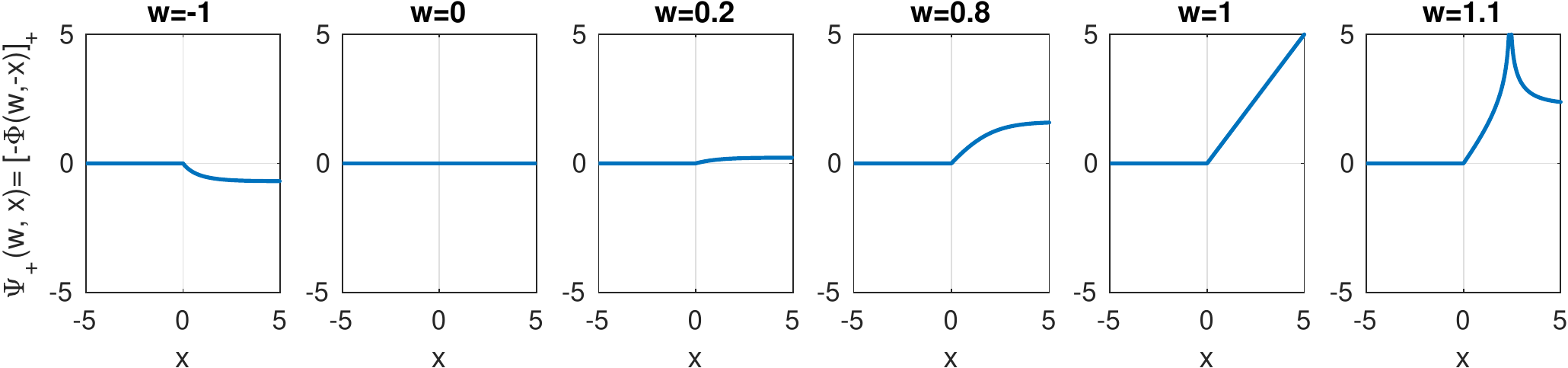}
	\caption{TLogSigmoidPlus activation function $\Psi_{+}(w,x)$ with different $w$  in $[0,1]$. When $w=0$, $\Psi_{+}(w,\cdot)$ is zero; when $w=1$, $\Psi_{+}(w,\cdot)$ becomes ReLU; when $w \in(0,1)$, $\Psi_{+}(w,\cdot)$ is non-polynomial.}\label{fig:activation-psi-plus1}
\end{figure}
See Fig. \ref{fig:activation-psi-plus1} for an illustration of $\Psi_{+}(w, x) $.  In particular, we notice that it becomes ReLU activation when $w=1$. The activation function $\Psi_{+}$ in \eqref{eq:PsiPlus} has only non-negative outputs and hence we call it the tunable Log-Sigmoid-Plus  (TLogSigmoidPlus) function.

\subsubsection{Tunable Soft-Affine Activation $\Phi(w,x)$}
If we take the state to be $\bar{s}_i = \log(1-p_i)$, $i\in V$,  then virus spread dynamics  give rise to  the essentially same TransNN models in \eqref{eq:nn-individual-node} and \eqref{eq:pop-nn} with the new states $(\bar{s}_i)_{i\in V}$ (see Appendix \ref{sec:Phi-activation}),  but with a different activation function given as follows: %
\begin{equation}\label{eq:phi-activation}
	\Phi(w, x) \triangleq \log(1-w+we^x) = -\Psi(w, -x),
\end{equation}
where  $x\in[-\infty, +\infty]$ and  $w\in [0,1]$. 
This activation function denoted by $\Phi(w,x)$ can be considered as a Softplus activation with a tunable parameter $w$ and potentially negative parts (as illustrated in Fig. \ref{fig:activation-phi}). 
	 {For this reason, we call this activation function denoted by $\Phi(w,x)$ the tunable Soft-Affine  (TSoftAffine) function}. See Appendix \ref{sec:Phi-activation} for more properties on the TSoftAffine activation function and its derivatives.
\begin{figure}[htb]
\centering 
\includegraphics[width=8cm, trim={3.8cm 0 3.2cm 0},clip]{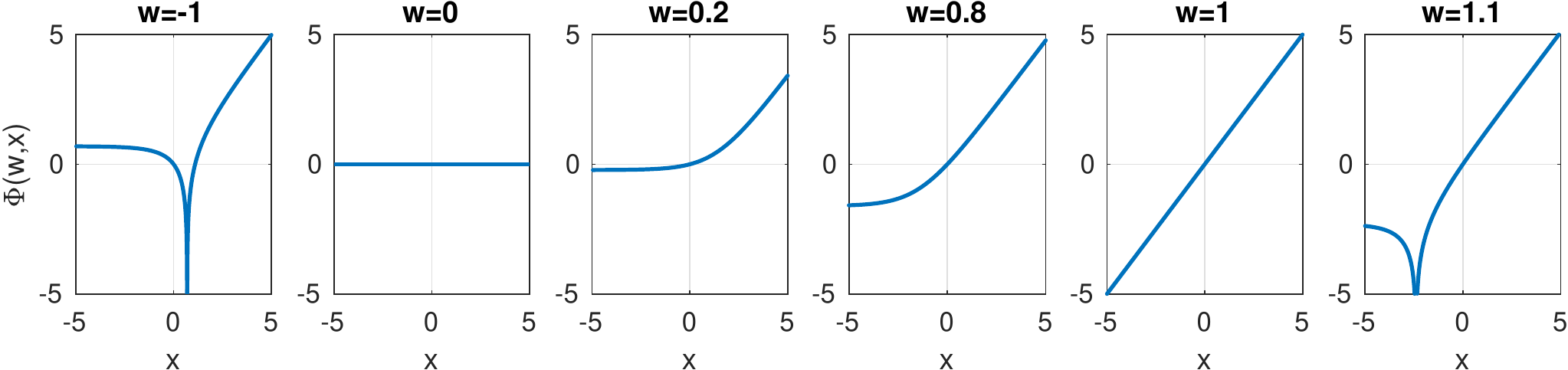}
	\caption{TSoftAffine activation function $\Phi(w,\cdot)$ with $w$ taking different values from $0$ to $1$. When $w=0$, $\Phi(w,\cdot)$ is zero and represents no pass; when $w=1$, $\Phi(w,\cdot)$ is linear and represent full pass; when $w\in(0,1)$, $\Phi(w,\cdot)$ is non-polynomial, monotonically increasing and continuous with negative values on the left plane, and represents partial pass. } \label{fig:activation-phi}
\end{figure}
\subsection{Experiments with Different Activation Functions}\label{sec:experiment}
In experiments we compare performances  of different activation functions on  the same simple neural network structure illustrated in  Fig. \ref{fig:nn-structure}.  The activation functions used include non-tunable activation functions (such as ReLU, sigmoid, tanh, SiLU, soft-exponential,  TLogSigmoid $\Psi$-activations and TSoftAffine $\Phi$-activations with fixed parameters), and  trainable activation functions  (such as the TLogSigmoid $\Psi$-activation and TSoftAffine $\Phi$-activation).

 \begin{figure}
 \centering
 	\includegraphics[width=8cm]{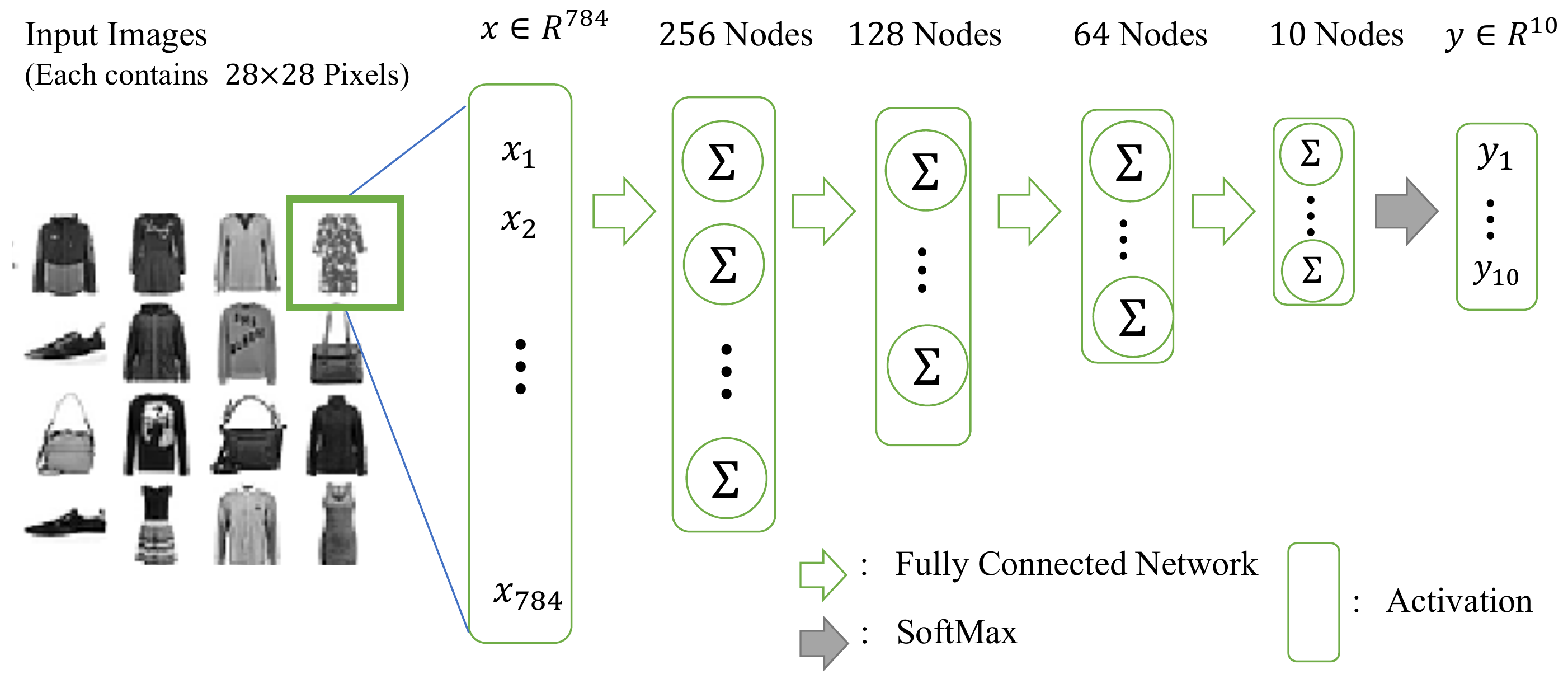}
 	\caption{Neural network structure for experiments in Sec. \ref{sec:experiment}} \label{fig:nn-structure}
 \end{figure}

The neural network structure consists of 4 consecutive fully-connected hidden layers  with respective dimensions  256, 128,  64 and 10. The output layer has 10 nodes and the  prediction output is the logarithm of the softmax of the nodal values in the last hidden layer. The training and validation criteria are chosen to be the negative-log-likelihood loss.  ADAM is used as the optimizer for gradient updates. The average training and validation errors over the number of epochs (i.e. the number of complete passes of the training dataset) of FashionMNIST data \cite{xiao2017fashion} are illustrated in Fig.~\ref{fig:train-val-diff-activations}.

\begin{figure}[htb]
\centering
	\includegraphics[width=8cm]{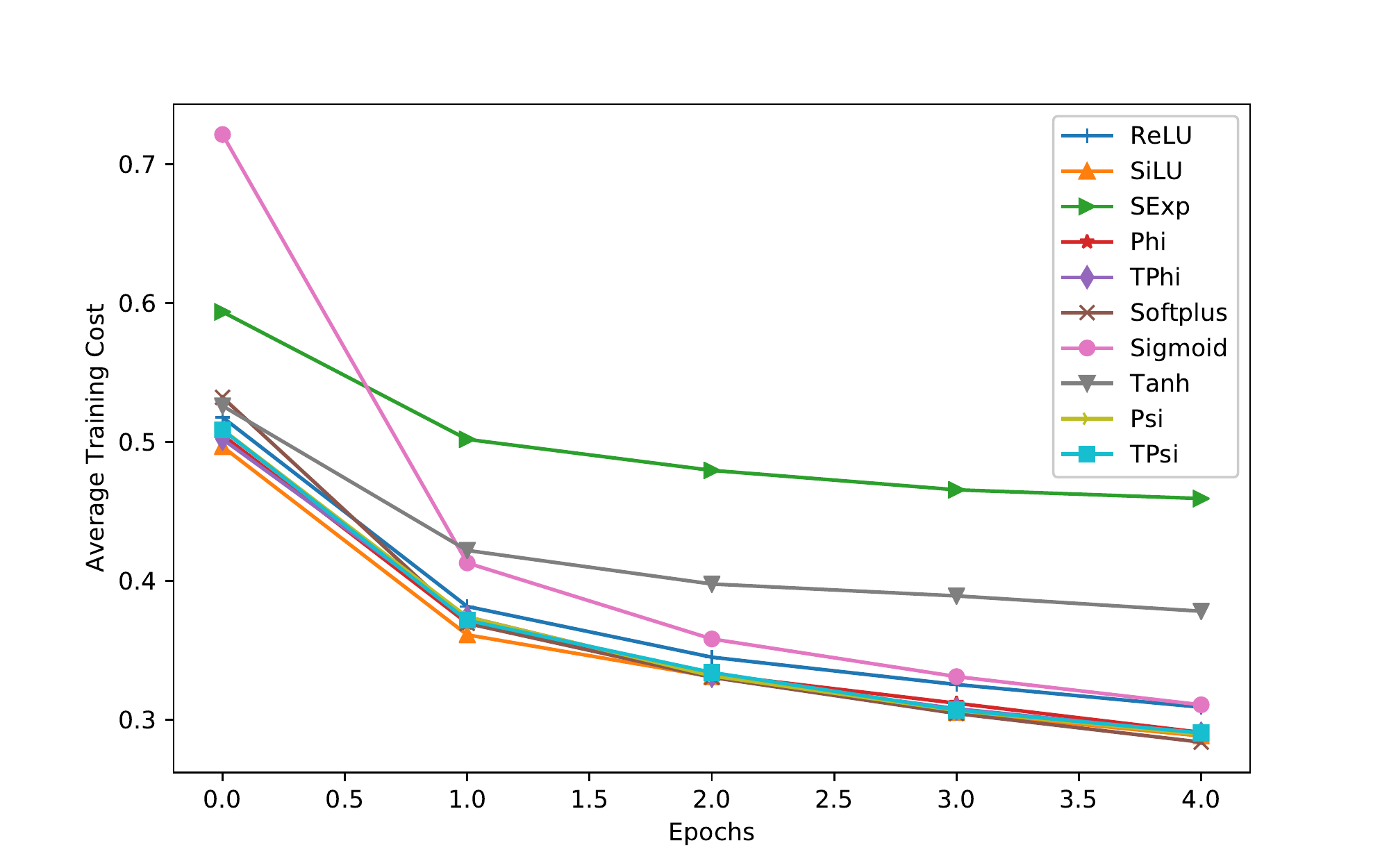}
		\includegraphics[width=8cm]{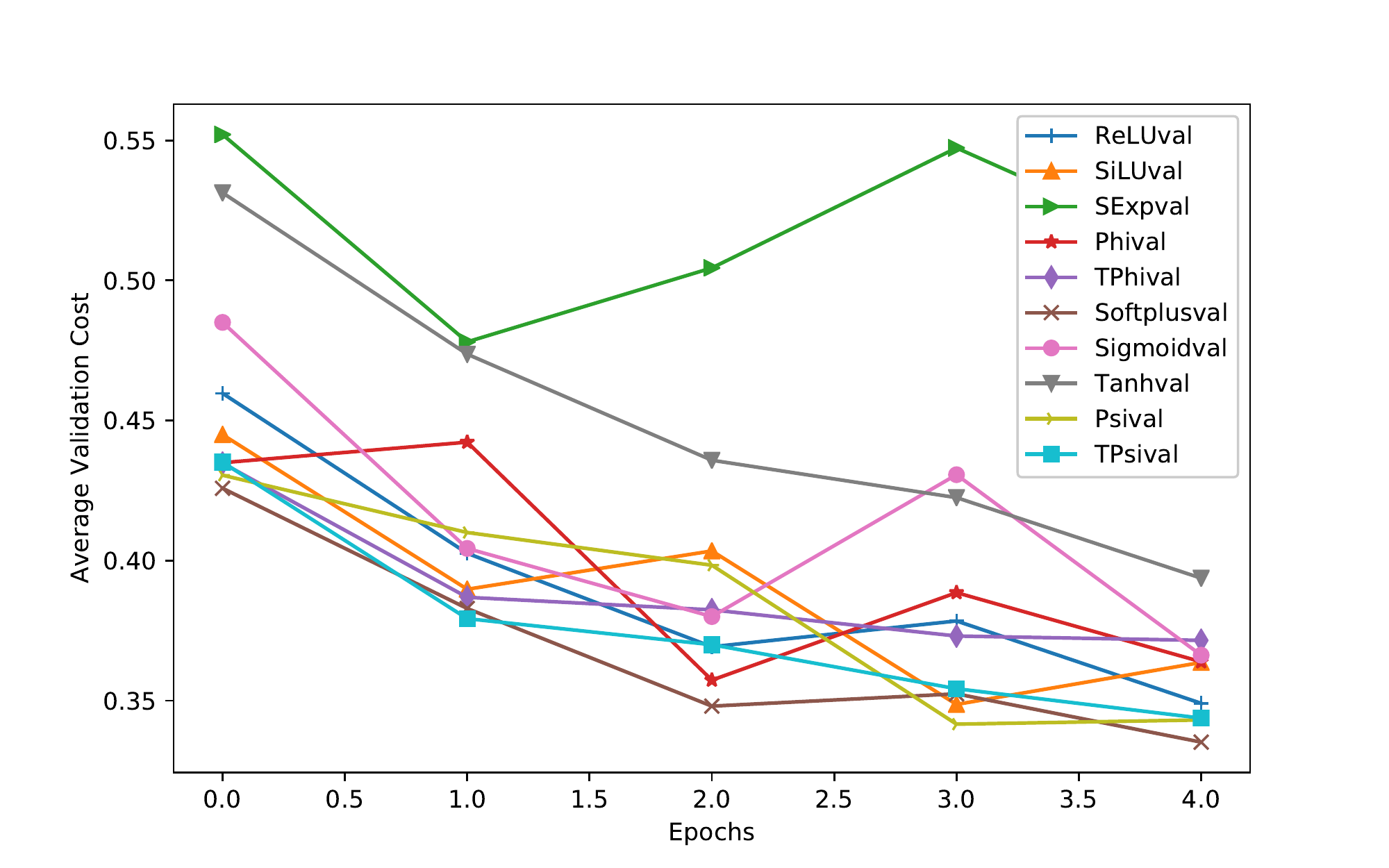}
	\caption{Training and validation costs of the  neural network structure in Fig. \ref{fig:nn-structure} with different activation functions using FashionMNIST data \cite{xiao2017fashion}. TPhi and TPsi denote respectively  $\Phi$ and $\Psi$ activations with trainable parameters. Phi and Psi denote respectively $\Phi$ and $\Psi$ with fixed  parameters.}
	\label{fig:train-val-diff-activations}
\end{figure}

\section{TransNNs in General Forms} \label{sec:TransNNs}

\subsection{Time-Varying and Layer-Dependent Networks}
In previous sections,  it is implicitly assumed the underlying networks do not change over time. 
To incorporate the time-varying (or layer-dependent) network weights and structures and time-varying  (or layer-dependent) activations, we can naturally generalize the dynamics as follows:
\begin{equation}\label{eq:TransNNGeneral}
\textbf{TransNN:} \quad 	s_{i}(k+1) = \sum_{j=1}^n a_{ij}^k \Psi( w_{ij}^k,s_j(k)), ~ i \in [n], 
\end{equation}
where $k \in \{0,..., T-1\}$, $a_{ij}^k \in \BR$, $w_{ij}^k\in [0,1]$, and 
$$ 
\Psi( w^k_{ij},s_j(k)) = - \log \left(1- w_{ij}^k + w_{ij}^ke^{- s_j(k)} \right).
$$ %
Clearly in the model above each time step can be associated to a single layer of neurons. The activation function TLogSigmoid denoted by $\Psi$ here can be replaced by TLogSigmoidPlus denoted by $\Psi_{+}$ or TSoftAffine denoted by $\Phi$.

In the most general form of TransNNs,  $a_{ij}^k$ and $s_i(k)$ are allowed to take any real values, and $w_{ij}^k$ takes values in $[0,1]$. 

{When the weight $a_{ij}^k$ is a non-negative rational and the nodal state $s_i(k)$ is non-negative for all $i,j \in [n]$, the model in \eqref{eq:TransNNGeneral} with TLogSigmoid activations can be exactly associated with virus spread models with (non-negative) probability states.}
In the corresponding virus spread model with probability states, the probability of node $i$ being infected at time $k+1$ satisfies 
\begin{equation} \label{eq:tv-probablistic-dynamics}
	(1-p_i(k+1)) = \prod_{j\in N_i^{\circ k}} (1- w_{ij}^k p_j(k))^{a_{ij}^k}
\end{equation}
where $N_i^{\circ k}$ denotes the set of neighbors of node $i$ including itself  at time $k \in \{0,..., T-1\}$. 

\begin{remark}[Negative Probability]
We note that $s_i(k) = -\log(1-p_i(k))$ is always non-negative in the virus spread model since $1-p_i(k) $ as a (non-negative) probability lies in $[0,1]$. In the general form of TransNNs as neural network models in \eqref{eq:TransNNGeneral}, $s_i(k)$ is allowed to take essentially any real value (including negative values), which means the probability of infection $p_i(k)$ may be negative as an intermediate variable.  The notion of negative probability has been discussed in \cite{feynman1987negative} by Feynman:   ``Conditional probabilities and probabilities of intermediate states may be negative in a calculation of probabilities of physical events or state".  An example presented in \cite{feynman1987negative} is that  the probability of a diffusing particle being at a location for an eigendirection could be negative. %
\end{remark}

\subsection{TransNNs as Learning Models}
{When we consider the equation \eqref{eq:TransNNGeneral} above as a neural network with feedforward connections, %
the neural network input is $s(0)\triangleq [s_1(0),..., s_n(0)]^\TRANS$ and the neural network output is $s(T)\triangleq [s_1(T),..., s_n(T)]^\TRANS$. 
That is 
\[
s(T) = \text{TransNN}_\theta(s(0)),
\]
where $\theta \triangleq (n, T, [a_{ij}^k],[w_{ij}^k])$, and  $[a_{ij}^k]$ (resp. $[w_{ij}^k]$) denotes the tensor containing elements $a_{ij}^k$ (resp. $w_{ij}^k$) with $i,j\in [n]$ and $k\in \{0,...,T\}$.
Given  the set of $D$ input-output data pairs $\{s^{(i)}(0), y^{(i)}\}_{i=1}^D$, 
the objective of training is to identify the parameters in TransNNs that minimize certain cost, for instance,  given by
\[
\min_{\theta \in \Theta} \left\{ \frac1D \sum_{i=1}^D l\left(\text{TransNN}_\theta(s^{(i)}(0)), ~y^{(i)}\right) +r(\theta)\right\},
\]
where $l(\cdot, \cdot)$ is a loss function or a distance function, and $r(\theta)$ represents the regularization cost of the parameter $\theta \in \Theta$, and $\Theta$ is the set of all feasible parameters.
We note that in \eqref{eq:TransNNGeneral}, the numbers of nodes may seem the same across layers. However, if we would like to let different layers have different number of nodes, we can simply use the maximum number of nodes across all layers  as the number of nodes for each layer, and then assign unactivated nodes (i.e. nodes without input connections) to  layers with less number of nodes.
 
\begin{remark}
The final output of TransNNs may take other forms. For example, 
the observation output for the nodes may be the probabilities  $p(T)\triangleq [p_1(T),..., p_n(T)]^\TRANS$ specified by \eqref{eq:neg-log-neg} as
$
 s_i = - \log(1-p_i)  \text{ and }   p_i  = 1 - e^{-s_i};
$
that is, $p$ is the nonlinear observation of the state $s$ as
\[
p = 1-\exp_\circ(-s) \triangleq o(s).
\]
The associated objective may be specified by x
\[
\min_{\theta \in \Theta} \left\{ \frac1D \sum_{i=1}^D l\left(o(\text{TransNN}_\theta(s^{(i)}(0))), ~y^{(i)}\right) +r(\theta)\right\}.
\]
Another example is that the output can also be considered as a function of the observation sequence of nodal states
\[
\hat{y}_\theta = f(\text{TransNN}^1_\theta(s(0)), \cdots , \text{TransNN}^T_\theta(s(0)) )  %
\]
with $\text{TransNN}^k_\theta(s(0)) \triangleq s(k)$.
Then the associated learning objective may  be specified for instance by
\[
\min_{\theta \in \Theta} \left\{ \frac1D \sum_{i=1}^D l\left(\hat{y}_\theta(s^{(i)}(0)), ~y^{(i)}\right) +r(\theta)\right\}. 
\]
\end{remark}
  The training of TransNNs can be conveniently carried out via gradient descend with automatic differentiation.  %

\begin{remark}[Automatic Selection of Activation Functions]
	During training of the TransNNs, treating the activation-level parameters $w_{**}$ as trainable parameters essentially provides the flexibility of automatically selecting activation functions from a continuum class of activations, which in addition includes linear, sigmoid, tanh, Softplus, ReLU, and LogSigmoid, since all these activation function are  special cases of TLogSigmoid, TLogSigmoidPlus and TSoftAffine, or the derivatives of them.
\end{remark}

\section{Multilayer TransNNs are Universal Function Approximators} \label{sec:universal-approx-TransNNs}
We follow the density-type definition of universal function approximators with arbitrary width in   \cite{pinkus1999approximation,leshno1993multilayer,hornik1989multilayer}. For a set $K$, let $C(K)$ denote the set of continuous functions from $K$ to $\BR$ and $C(K;\BR^m)$ the set of continuous functions from $K$ to $\BR^m$.   
A function $u:\Omega \to \BR^m$ defined almost every on a domain $\Omega$ 
is said to be \emph{locally essentially bounded} on $\Omega$, denoted by $u\in L^\infty_{\operatorname{loc}}(\Omega;\BR^m)$, if for every compact set $K\in \Omega$, $\operatorname{ess}\sup_{x\in K} \|u(x)\|< \infty$, where $\|\cdot\|$ denotes the Euclidean norm in $\BR^m$. 
%
A set $\mathcal{M}$ of (parameterized) functions in $L^\infty_{loc}(\BR^d;\BR^m)$ is called a \emph{Universal Function Approximator for $C(\BR^d;\BR^m)$} if 
 given any 
${\varepsilon >0}$, any compact subset of ${K\subseteq \BR^{d}}$ and any 
${f\in C(K)}$, there exists 
${F\in {\mathcal {M}}}$ such that
$$
{\operatorname{ess}\sup_{x\in K}\|F(x)-f(x)\|<\varepsilon },
$$
{or equivalently 
${\|F(x)-f(x)\|<\varepsilon }$ for almost all $x\in K$}.
In other words, 
$
\mathcal{M}
$ is a universal function approximator for $C(\BR^d;\BR^m)$ if it is {\emph{dense}}  
in $C(\BR^d;\BR^m)$  in the topology of uniform convergence on compacta (\cite{pinkus1999approximation,leshno1993multilayer,hornik1989multilayer}). %
\begin{proposition} \label{prop:TransNN-typeII}
Let $w\in(0,1)$ be given. Consider the parameter set  
\begin{equation}
 	\begin{aligned}
 		 \Theta_0 \triangleq \Big\{(n,  (a_{i})_{i=1}^n, &(\eta_i)_{i=1}^n, (b_i)_{i=1}^n)\big|  \\
 		 & n\in \BN, a_i, b_i \in \BR, \eta_i\in \BR^n \Big\}.
 	\end{aligned}
 \end{equation} Then feedforward neural network model  with one hidden layer from $\BR^d \to \BR$ given by
\begin{equation} \label{eq:ffphi}
y^\theta(x) = \sum_{i=1}^n a_{i} \Psi(w, \eta^\TRANS_i x+ b_i) , \quad x\in \BR^d, ~y^\theta(x) \in \BR,
\end{equation}
with arbitary parameters $\theta  \triangleq (n,  (a_{i})_{i=1}^n, (\eta_i)_{i=1}^n, (b_i)_{i=1}^n)$  in $\Theta_0$,  is a universal function approximator for $C(\BR^d)$. 

\end{proposition}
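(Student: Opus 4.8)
The plan is to deduce the result from the sharp non-polynomial criterion for single-hidden-layer universal approximation due to Leshno, Lin, Pinkus and Schocken \cite{leshno1993multilayer} (see also the survey \cite{pinkus1999approximation} and \cite{hornik1989multilayer}), rather than constructing approximants by hand. Observe first that the family \eqref{eq:ffphi} is exactly the linear span of the ridge functions $x \mapsto \Psi(w, \eta^\TRANS x + b)$, $\eta \in \BR^d$, $b \in \BR$, generated by the single fixed univariate activation $g(\cdot) \triangleq \Psi(w, \cdot)$. The cited theorem states that, for a continuous (more generally, locally bounded and almost everywhere continuous) univariate $g$, this span is dense in $C(\BR^d)$ in the topology of uniform convergence on compacta \emph{if and only if} $g$ is not a polynomial. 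Since density in this topology is precisely the universal-approximation property in the sense defined above, it suffices to verify two hypotheses on $g = \Psi(w, \cdot)$: continuity and non-polynomiality.

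For the first hypothesis, fix $w \in (0,1)$ and note that the argument of the logarithm in \eqref{eq:typeI-Phi} obeys $1 - w + w e^{-x} \geq 1 - w > 0$ for every $x \in \BR$; hence $g$ is finite, real-analytic, and in particular continuous and locally bounded on all of $\BR$, and the extended branch $\log 0 = -\infty$ is never reached. For the second hypothesis I would argue that $g$ has a finite limit at $+\infty$ while being non-constant: as $x \to +\infty$ we have $e^{-x} \to 0$, so $g(x) \to -\log(1-w)$, whereas $g(0) = -\log(1) = 0 \neq -\log(1-w)$ because $w \in (0,1)$. A polynomial possessing a finite limit at $+\infty$ must be constant, so $g$ cannot be a polynomial. (Alternatively, the strict inequality underlying \eqref{eq:2nd-Psi-grad-w}, giving $\partial_x^2 g < 0$ in the interior, already excludes affine functions, and the one-sided boundedness then excludes every higher degree.)

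With both hypotheses verified, the theorem applies verbatim and yields density of \eqref{eq:ffphi} in $C(\BR^d)$, which is the claim. The only point requiring care — and the main, if modest, obstacle — is that the proposition \emph{fixes} the shape parameter $w$ rather than letting it range, so one must confirm that the classical result genuinely applies to a single, fixed activation function. This is indeed the case: the theorem quantifies only over the affine parameters $\eta_i, b_i$ and the output weights $a_i$, which is exactly the freedom encoded in $\Theta_0$. Finally, matching the $\operatorname{ess}\sup$ formulation of the definition to the uniform-on-compacta conclusion of the theorem is immediate, since every $y^\theta$ in \eqref{eq:ffphi} and the target $f$ are continuous, so the essential supremum coincides with the ordinary supremum on the compact set $K$.
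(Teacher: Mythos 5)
Your proposal is correct and follows essentially the same route as the paper: both reduce the claim to the non-polynomiality criterion of \cite[Theorem 1]{leshno1993multilayer} and verify that for fixed $w\in(0,1)$ the activation $\Psi(w,\cdot)$ is continuous (since $1-w+we^{-x}\geq 1-w>0$) and non-polynomial. Your explicit argument for non-polynomiality (finite limit at $+\infty$ yet non-constant) merely fills in a detail the paper states as an observation, so there is nothing further to reconcile.
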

\begin{proof}
 We observe  that when  $w \in (0, 1)$, $\Psi(w, \cdot)$ is continuous and non-polynomial.
The desired result is an immediate consequence of  \cite[Theorem 1]{leshno1993multilayer}, which states that  feedforward networks with one hidden layer  and a locally bounded piecewise continuous activation function are universal function approximators if and only if the activation function is non-polynomial (almost everywhere). 
\end{proof}

Now let's consider a feedforward TransNN with a fixed global bias term $b\neq 0$ and with a  trainable activation-level parameter for each link (see Fig. \ref{fig:single-layer-TransNN}).

\begin{figure}[htb]
\centering
	\includegraphics[width=8cm]{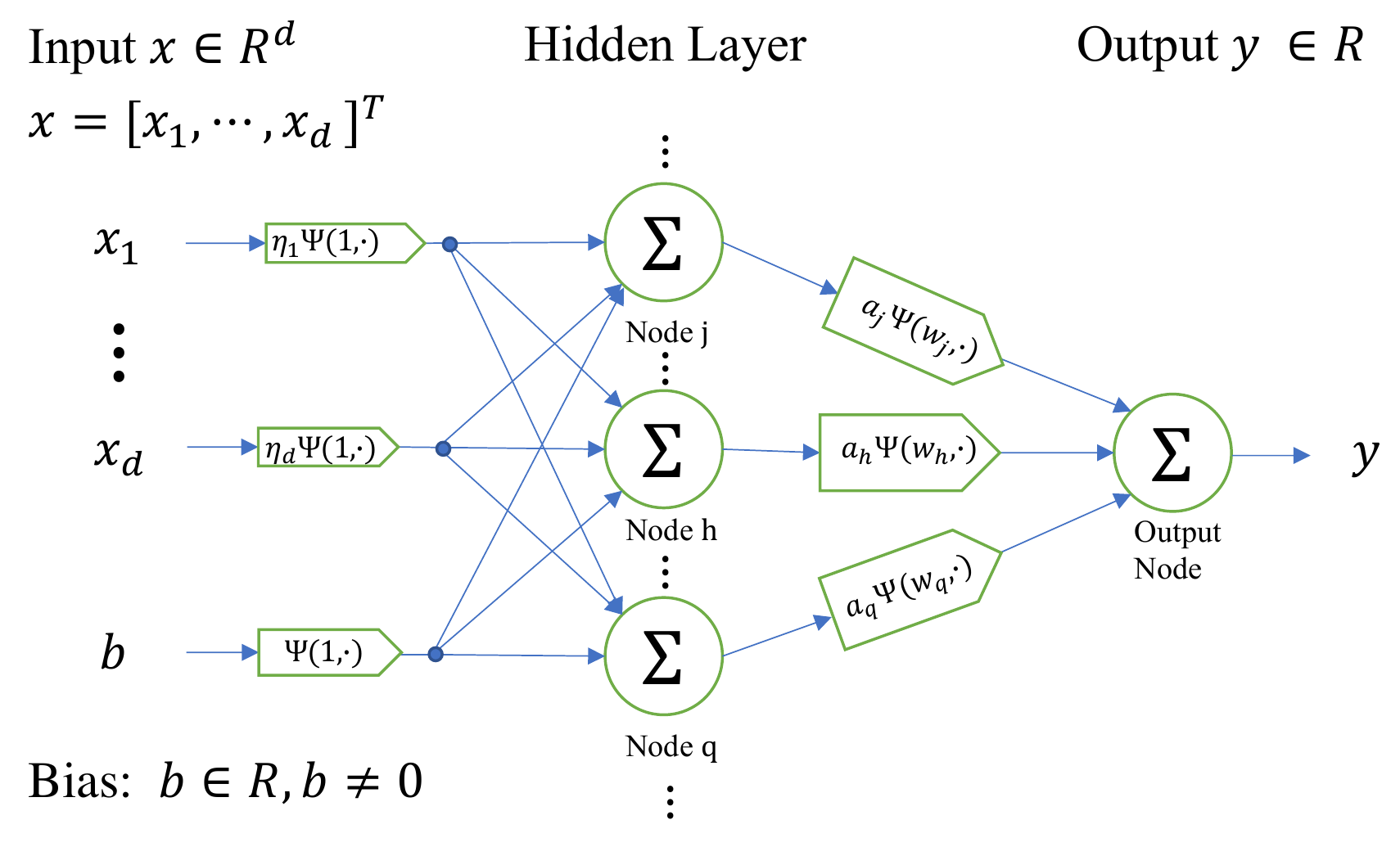}
	\caption{An illustration of single hidden layer TransNN with TLogSigmoid activation function $\Psi$. We note that $\Psi(1,\alpha)=\alpha$ for $\alpha \in \BR$.} \label{fig:single-layer-TransNN}
\end{figure}
\begin{theorem}[TransNNs with Real Weights]\label{thm:TransNNI-psi}
Let $b\neq 0$ be given.   Consider the parameter set  
\begin{equation}
 	\begin{aligned}
 		 \Theta_{_{\BR}}\triangleq \Big\{(n,  &(a_{i})_{i=1}^n, (\eta_i)_{i=1}^n, (w_i)_{i=1}^n)\big|  \\
 		 & n\in \BN, ~a_i  \in \BR, ~\eta_i\in \BR^n, ~w_i \in [0,1] \Big\}.
 	\end{aligned}
 \end{equation}
The feedforward neural network model with one hidden layer and a fixed non-zero bias term $b$ given by 
\begin{equation}  \label{eq:NNphi-selfloop}
\begin{aligned}
	y^\theta(x) & = \sum_{i=1}^n a_{i} \Psi(w_i, \eta_i^\TRANS x+b) ,  ~  x\in \BR^d, ~ y^\theta(x) \in \BR 
\end{aligned}
\end{equation}
with arbitrary parameters $\theta \triangleq (n, (a_{i})_{i=1}^n, (\eta_i)_{i=1}^n, (w_i)_{i=1}^n)$  in $\Theta_{_{\BR}}$ 
is a universal function approximator for $C(\BR^d)$. 
\end{theorem}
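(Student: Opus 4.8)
The plan is to reduce the fixed-bias model in \eqref{eq:NNphi-selfloop} to the free-bias model of Proposition~\ref{prop:TransNN-typeII}, using the activation-level parameters $w_i$ to recover the flexibility lost by freezing the bias at $b$. First I would fix a reference value $w_0 \in (0,1)$; by Proposition~\ref{prop:TransNN-typeII} the family of functions $\sum_{i} a_i \Psi(w_0, \eta_i^\TRANS x + c_i)$ with free $a_i \in \BR$, $\eta_i \in \BR^d$ and $c_i \in \BR$ is already dense in $C(\BR^d)$. It therefore suffices to show that every such function can be written \emph{exactly} as a member of \eqref{eq:NNphi-selfloop}.

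The crux is an algebraic identity that trades a variable bias for a variable activation level at the fixed bias $b$. Writing $\Psi(w, u+b) = -\log(1 - w + w e^{-b} e^{-u})$ and matching the coefficients of $1$ and of $e^{-u}$ against $\lambda(1 - w_0 + w_0 e^{-c} e^{-u})$, I would solve $\lambda = (1-w)/(1-w_0)$ together with
\[
c = b - \log\frac{w(1-w_0)}{(1-w)w_0},
\]
which gives the exact relation $\Psi(w_0, u+c) = \Psi(w, u+b) + \log\frac{1-w}{1-w_0}$. As $w$ ranges over $(0,1)$ the effective bias $c$ is a continuous, strictly decreasing bijection onto all of $\BR$, so any prescribed $c_i$ is realized by a unique $w_i \in (0,1)$, at the price of an additive constant.

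Summing these identities over $i$ I would obtain
\[
\sum_{i} a_i \Psi(w_0, \eta_i^\TRANS x + c_i) = \sum_{i} a_i \Psi(w_i, \eta_i^\TRANS x + b) + C,
\]
where $C \triangleq \sum_i a_i \log\frac{1-w_i}{1-w_0}$ is independent of $x$. The last step is to represent the constant $C$ by one additional hidden unit with $\eta = 0$: since $\Psi(w, \eta^\TRANS x + b) = \Psi(w, b)$ is constant in $x$ when $\eta=0$, and $\Psi(w, b) = -\log(1 - w + w e^{-b}) \neq 0$ for every $w \in (0,1)$ precisely because $b \neq 0$, I can pick any $w_{n+1} \in (0,1)$ and set $a_{n+1} = C/\Psi(w_{n+1}, b)$. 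This exhibits the target free-bias approximant as an exact element of \eqref{eq:NNphi-selfloop}, so the fixed-bias family inherits density and is a universal approximator for $C(\BR^d)$.

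The main obstacle, and the reason the hypothesis $b \neq 0$ is indispensable, is the residual constant $C$: if $b=0$ then every $\eta=0$ unit collapses to $\Psi(w,0)=0$, no nonzero constant can be produced, and the reduction breaks. Everything else is the bookkeeping of the reparametrization identity and the verification that $w \mapsto c$ maps $(0,1)$ onto $\BR$.
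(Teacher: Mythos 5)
Your proof is correct, and it takes a genuinely different route from the paper's. The paper re-runs the Leshno--Lin--Pinkus--Schocken argument directly on the fixed-bias family: it places all difference quotients in $\eta$ inside $\overline{\Sigma}_1$, evaluates $\frac{d^n}{d\eta^n}\Psi(w,\eta x+b)\big|_{\eta=0}=x^n\Psi^{(n,2)}(w,b)$, chooses activation levels $\omega_n\in(0,1)$ making these coefficients (and, using $b\neq 0$, the constant term $\Psi(\omega_0,b)$) nonzero so that $\overline{\Sigma}_1$ contains all polynomials, and concludes by Weierstrass. You instead reduce Theorem~\ref{thm:TransNNI-psi} to Proposition~\ref{prop:TransNN-typeII} through the exact reparametrization
\begin{equation*}
\Psi(w_0,u+c)=\Psi(w,u+b)+\log\tfrac{1-w}{1-w_0},\qquad c=b-\log\tfrac{w(1-w_0)}{(1-w)w_0},
\end{equation*}
which I have verified: since $1-w_0+w_0e^{-u-c}=\tfrac{1-w_0}{1-w}\bigl(1-w+we^{-u-b}\bigr)$, the identity holds for all $u$, and $w\mapsto c$ is a decreasing bijection of $(0,1)$ onto $\BR$, so every free bias is realized by a unique activation level; the leftover constant is absorbed by an $\eta=0$ unit, which is where $b\neq 0$ enters (exactly the same role it plays in the paper, namely producing a nonzero constant). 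Your argument is more structural: it exhibits the tunable activation level as an exact change of variables for the bias, makes the fixed-bias family literally contain the free-bias family of Proposition~\ref{prop:TransNN-typeII} as a set of functions, and avoids computing higher derivatives of $\Psi$ altogether. The paper's version is self-contained within the Leshno framework and does not need the extra constant unit, but both ultimately rest on \cite[Theorem 1]{leshno1993multilayer} (yours through Proposition~\ref{prop:TransNN-typeII}, the paper's by reproducing its mechanism).
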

\begin{proof}%
We follow closely the proof of  \cite[Theorem 1]{leshno1993multilayer}. Let $b\neq 0$ be given. 
Let 
	\begin{equation*}
		{\sum_d}  \triangleq \text{span}\left\{\Psi(w, \eta^\TRANS x+b): ~\eta \in \BR^d,~ w\in [0,1] \right\}. 
	\end{equation*}

\vspace{0.1cm}	

\noindent\textbf{Step 1.} \emph{If $\sum_1$ is dense in $C(\BR)$, then $\sum_d$ is dense in $C(\BR^d)$}	

The space $V= \operatorname{span}\{f(a^\TRANS x)| a\in \BR^d, f\in C(\BR)\}$ is dense in $C(\BR^d)$ (see e.g. \cite{vostrecov1961approximation}).  Consider any arbitrary $g\in C(\BR^d)$ and any compact set $K\subset \BR^d$. $V$ is dense in $C(K)$ means that given any $\varepsilon >0$ there exists $f_i \in C(\BR)$ and $a_i\in \BR^d$, $i \in\{1,..., k\}$, such that 
$
|g(x)- \sum_{i=1}^k f_i(a_i^\TRANS x)|\leq {\varepsilon}/{2}
$ 
for all $x\in K.$	 Let $\{a_i^\TRANS x| x\in K\}\subset [\alpha_i, \beta_i]$ for finite interval $[\alpha_i, \beta_i]\subset \BR$, $i \in\{1,..., k\}$. 
Since $\Sigma_1$ is dense in $C([\alpha_i, \beta_i])$, $i \in\{1,..., k\}$,  there exist constants $c_{ij}\in \BR$, $w_{ij}\in [0,1]$ and  $\eta_{ij}\in \BR$, such that 
$
|f_i(y)-\sum_{j=1}^{m_i}c_{ij}\Psi(w_{ij},\eta_{ij}^\TRANS y+b)|< {\varepsilon}/{(2k)}
$
for all $y\in [\alpha_i, \beta_i]$. Hence, 
$
\big|g(x)- \sum_{i=1}^k\sum_{j=1}^{m_i}c_{ij}\Psi(w_{ij}, \eta_{ij}^\TRANS x+b)\big|<\varepsilon
$
for all $x\in K$. Thus $\Sigma_1$ is dense in $C(\BR)$ implies that $\Sigma_d$ is dense in $C(\BR^d)$. 
\vspace{0.1cm}	

\noindent\textbf{Step 2.} \emph{$\sum_1$ is dense in $C(\BR)$.}
\vspace{0.1cm}

For $w \in (0,1)$,  $\Psi(w, \cdot)$ is non-polynomial. Then
 $[\Psi(w, (\eta + h)x+b) - \Psi(w, \eta x+b) ]/h  \in \Sigma_1$ for every $\eta \in \BR$ with $h \neq 0$. Hence it follows that $(d/d\eta)\Psi(w, \eta x+b) \in \overline{\Sigma}_1$, where  $\overline{\Sigma}_1$ denotes the closure of $\Sigma_1$.  By the same argument, $(d^n/d\eta^n)\Psi(w, \eta x+b) \in \overline{\Sigma}_1$ for all $n\in \BN_0$ (and all $\eta \in \BR)$.
Let $ \Psi^{(n, 2)}$ denote the $n$th order partial derivatives with respect to the second variable of $\Phi(\cdot, \cdot)$. We note that
\begin{equation}
 \frac{d^n}{d\eta^n } \Psi(w, \eta^\TRANS x+b)  = x^n  \Psi^{(n, 2)}(w, \eta^\TRANS x +b). 
\end{equation}
Based on the explicit form of $ \Psi^{(n, 2)}$given in \eqref{eq:psi-grad-x} and \eqref{eq:high-grad-psi-form},  
we observe that there exists $\{\omega_n \in(0,1)\}_{n=1}^\infty$ such that %
\begin{equation}
\begin{aligned}
	x^n  \Psi^{(n, 2)} &(\omega_n, 0) =  \frac{d^n}{d\eta^n } \Psi(\omega_n, \eta x+b) \mid_{\eta =0}\quad
  \in 	\overline{\Sigma}_1	
\end{aligned}
\end{equation}
is always non-zero for all $n\geq 1$; furthermore, since $b\neq 0$, \text{when} $\omega_0 \in (0,1)$,
\begin{equation}
\Psi(\omega_0, \eta x+b)|_{\eta=0} = -\log(1-\omega_0+\omega_0 e^{-b})\neq 0  \in 	\overline{\Sigma}_1	 . 
\end{equation}
{This imply that   $\overline{\Sigma}_1$ contains all polynomials.} 
 By Weierstrass's Approximation Theorem, it follows that  $\overline{\Sigma}_1	$ contains $C(K)$ for each $K\subset R^n$. That is, ${\Sigma}_1	$ is dense in $C(R)$.
\end{proof}

\begin{theorem}[TransNNs with Rational Weights]\label{thm:TransNNI-psi-rational}
Let $b\neq 0$ be given.   Consider the parameter set  
\begin{equation}
 	\begin{aligned}
 		 \Theta_{_{\operatorname{Q}}}\triangleq \Big\{(n,  &(a_{i})_{i=1}^n, (\eta_i)_{i=1}^n, (w_i)_{i=1}^n)\big|  \\
 		 & n\in \BN,~ a_i  \in \operatorname{Q},~ \eta_i\in \BR^n, ~w_i \in [0,1] \Big\}.
 	\end{aligned}
 \end{equation}
Then the feedforward neural network model with one hidden layer, a fixed non-zero bias term $b$ and rational weights $\{a_i\}$ given by 
\begin{equation}  \label{eq:NNphi-selfloop}
\begin{aligned}
	y^\theta(x) = \sum_{i=1}^n a_{i} \Psi(w_i, \eta_i^\TRANS x+b) ,  \quad  x\in \BR^d, ~ y^\theta(x) \in \BR 
\end{aligned}
\end{equation}
with arbitrary parameters $\theta \triangleq (n, (a_{i})_{i=1}^n, (\eta_i)_{i=1}^n, (w_i)_{i=1}^n)$ in $\Theta_{_{\operatorname{Q}}}$, is a universal function approximator for $C(\BR^d)$.
\end{theorem}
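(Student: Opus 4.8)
The plan is to deduce the rational-weight statement directly from the real-weight version already established in Theorem~\ref{thm:TransNNI-psi}, exploiting the density of $\operatorname{Q}$ in $\BR$ to replace the real output weights by nearby rationals without spoiling the approximation. Since the hidden-layer parameters $(n, (\eta_i)_{i=1}^n, (w_i)_{i=1}^n)$ and the fixed bias $b$ are already admissible in $\Theta_{_{\operatorname{Q}}}$, only the coefficients $a_i$ need to be rationalized.

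First I would fix an arbitrary compact set $K \subset \BR^d$, a target $f \in C(K)$, and $\varepsilon > 0$. By Theorem~\ref{thm:TransNNI-psi}, there exist $n \in \BN$, real coefficients $a_i$, directions $\eta_i \in \BR^d$, and levels $w_i \in [0,1]$ such that
\[
\Big| f(x) - \sum_{i=1}^n a_i\, \Psi(w_i, \eta_i^\TRANS x + b) \Big| < \frac{\varepsilon}{2}, \qquad x \in K.
\]
Next I would observe that, for each $i$, the map $x \mapsto \Psi(w_i, \eta_i^\TRANS x + b)$ is continuous and finite on $K$: as $x$ ranges over $K$ the argument $\eta_i^\TRANS x + b$ ranges over a compact interval, on which $1 - w_i + w_i e^{-(\eta_i^\TRANS x + b)}$ is bounded away from zero, so $\Psi(w_i, \cdot)$ is finite and continuous there. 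Consequently $M_i \triangleq \sup_{x \in K} |\Psi(w_i, \eta_i^\TRANS x + b)| < \infty$.

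Then, invoking the density of $\operatorname{Q}$ in $\BR$, I would choose rationals $\tilde a_i \in \operatorname{Q}$ with $|a_i - \tilde a_i| < \varepsilon / \big(2 n (M_i + 1)\big)$ for every $i$, where the $+1$ sidesteps the degenerate case $M_i = 0$. The triangle inequality then gives
\[
\Big| \sum_{i=1}^n (a_i - \tilde a_i)\, \Psi(w_i, \eta_i^\TRANS x + b) \Big| \le \sum_{i=1}^n |a_i - \tilde a_i|\, M_i < \frac{\varepsilon}{2}, \qquad x \in K,
\]
and combining this with the previous display yields $\big| f(x) - \sum_{i=1}^n \tilde a_i \Psi(w_i, \eta_i^\TRANS x + b) \big| < \varepsilon$ for all $x \in K$, with all output weights $\tilde a_i$ now rational. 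This is exactly the claimed density of the function class with parameters in $\Theta_{_{\operatorname{Q}}}$.

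The argument presents no serious obstacle, since it is a routine perturbation layered on top of Theorem~\ref{thm:TransNNI-psi}; the only points needing care are the finiteness claim $M_i < \infty$ --- which reduces to $1 - w_i + w_i e^{-y}$ staying bounded away from $0$ on compact $y$-intervals --- and the elementary book-keeping that absorbs the coefficient perturbation into the remaining $\varepsilon/2$ of the error budget.
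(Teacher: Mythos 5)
Your proof is correct. Both you and the paper ultimately rest on the density of $\operatorname{Q}$ in $\BR$, but the routes differ: the paper disposes of the theorem in one sentence by asserting that polynomials with rational coefficients are dense in polynomials with real coefficients, implicitly leaning on the internal structure of the proof of Theorem~\ref{thm:TransNNI-psi} (where the closure of the span was shown to contain all polynomials). You instead treat Theorem~\ref{thm:TransNNI-psi} as a black box and run a direct $\varepsilon/2$ perturbation of the output coefficients $a_i$, justified by the uniform bound $M_i = \sup_{x\in K}|\Psi(w_i,\eta_i^\TRANS x + b)| < \infty$ on the compact set $K$ (which holds since $1 - w_i + w_i e^{-y} \geq 1 - w_i > 0$ for $w_i < 1$, and $\Psi(1,y)=y$ is finite on compacta). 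Your version is more self-contained and arguably more rigorous: it makes explicit exactly which quantities must be controlled when the real weights are replaced by rationals, whereas the paper's appeal to polynomial density leaves the reader to supply the same perturbation bookkeeping. The paper's version buys brevity; yours buys a complete and checkable argument that does not depend on how Theorem~\ref{thm:TransNNI-psi} was proved.
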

\begin{proof}
Since the set of rationals are dense in the set of reals, polynomials with rational coefficients are dense in polynomials with real coefficients. This, together with  Theorem \ref{thm:TransNNI-psi}, implies the desired result.
\end{proof}
\begin{theorem}[$m$-Output TransNNs with Rational Weights]\label{thm:TransNNIRm} 
Let $b\neq 0$ be given. 
Consider a feedforward neural network $F^\theta(x)=[F_1^\theta(x), ..., F^\theta_m(x)]^\TRANS$  from $\BR^d \to \BR^m$ given as follows:  \begin{equation}  \label{eq:NNphi-selfloop2}
\begin{aligned}
	& F_i^\theta(x) = \sum_{j=1}^n a_{ij} \Psi(\omega_{j}, \eta_j^\TRANS x+b), ~ ~ x\in \BR^d, ~F_i^\theta \in \BR
	 \end{aligned}
\end{equation}
for $i \in [m]$,  where the parameter $$\theta \triangleq (n, (a_{ij})_{i\in[m],j \in[n]}, (\omega_j)_{j\in[n]},(\eta_j)_{j\in [n]})$$ can be arbitrarily chosen from  the parameter set
\begin{equation}
 	\begin{aligned}
 		 \Theta_{_{\operatorname{Q}}}^m\triangleq  \Big\{(n, &(a_{ij})_{i\in[m],j \in[n]}, (\omega_j)_{j\in[n]},(\eta_j)_{j\in [n]}))\big|  \\
 		 & n\in \BN,~ a_{ij}  \in \operatorname{Q},~ \eta_j\in \BR^n, ~w_j \in [0,1] \Big\}.
 	\end{aligned}
 \end{equation}
Then the neural network model $F^\theta(\cdot):\BR^d\to \BR^m$ in  \eqref{eq:NNphi-selfloop2}  is a universal function approximator for  $C(\BR^d;\BR^m)$. %
\end{theorem}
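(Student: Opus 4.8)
The plan is to reduce the $m$-output case to the single-output result of Theorem~\ref{thm:TransNNI-psi-rational} by building all $m$ output maps on top of a single shared pool of hidden units. The essential structural feature of \eqref{eq:NNphi-selfloop2} is that the hidden-unit parameters $(\omega_j,\eta_j)$ and the common bias $b$ are shared across all $m$ output components, while only the output coefficients $a_{ij}$ are permitted to vary with $i$. Hence the issue is not whether each scalar component $f_i$ can be approximated --- that is precisely Theorem~\ref{thm:TransNNI-psi-rational} --- but rather whether one collection $\{(\omega_j,\eta_j)\}_{j=1}^n$ can simultaneously serve all $m$ components with rational output weights.

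First I would fix a compact set $K\subset\BR^d$, a target $f=(f_1,\dots,f_m)\in C(K;\BR^m)$, and a tolerance $\varepsilon>0$. Since $f\in C(K;\BR^m)$, each component $f_i$ lies in $C(K)$. Applying Theorem~\ref{thm:TransNNI-psi-rational} to each $f_i$ with tolerance $\varepsilon/\sqrt{m}$, I obtain for every $i\in[m]$ an integer $n_i\in\BN$ together with rational coefficients $a^{(i)}_k\in\operatorname{Q}$, directions $\eta^{(i)}_k\in\BR^d$, and activation levels $w^{(i)}_k\in[0,1]$, $k\in[n_i]$, such that
\[
\Big| f_i(x)-\sum_{k=1}^{n_i}a^{(i)}_k\,\Psi\big(w^{(i)}_k,\,(\eta^{(i)}_k)^\TRANS x+b\big)\Big|<\frac{\varepsilon}{\sqrt{m}},\quad\forall x\in K.
\]

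Next I would merge these $m$ separate approximations onto a common hidden layer by concatenation. Setting $n\triangleq n_1+\cdots+n_m$, I relabel the full list of pairs $\{(w^{(i)}_k,\eta^{(i)}_k):i\in[m],\,k\in[n_i]\}$ as $\{(\omega_j,\eta_j)\}_{j=1}^n$, so that these $n$ hidden units are now shared by all output components. For each component $i$ I define the rational output weight $a_{ij}$ to equal $a^{(i)}_k$ when the $j$th hidden unit is the one contributed by $f_i$'s own approximation (the $k$th unit of that block), and to equal $0$ for every hidden unit contributed by a different component. With these choices the zero weights annihilate the irrelevant units, so $F_i^\theta(x)=\sum_{j=1}^n a_{ij}\Psi(\omega_j,\eta_j^\TRANS x+b)$ collapses exactly to $f_i$'s individual approximant, giving $|F_i^\theta(x)-f_i(x)|<\varepsilon/\sqrt{m}$ for all $x\in K$ and all $i\in[m]$. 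Crucially, all $a_{ij}$ remain rational and each $\omega_j\in[0,1]$, so $\theta=(n,(a_{ij}),(\omega_j),(\eta_j))\in\Theta_{_{\operatorname{Q}}}^m$.

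Finally I would convert the componentwise bounds into the Euclidean-norm bound demanded by the vector-valued definition: since each coordinate error is strictly below $\varepsilon/\sqrt{m}$, we obtain $\|F^\theta(x)-f(x)\|=\big(\sum_{i=1}^m|F_i^\theta(x)-f_i(x)|^2\big)^{1/2}<\sqrt{m}\cdot\varepsilon/\sqrt{m}=\varepsilon$ for all $x\in K$, which (by continuity) also yields the required essential-supremum bound and establishes density of \eqref{eq:NNphi-selfloop2} in $C(\BR^d;\BR^m)$. I do not anticipate a genuine obstacle here; the only points requiring care are the bookkeeping of the shared hidden layer --- ensuring that pooling the units and zero-padding the output weights preserves both rationality of the $a_{ij}$ and the $[0,1]$ constraint on the activation levels $\omega_j$ --- together with the $\sqrt{m}$ rescaling of the per-component tolerance.
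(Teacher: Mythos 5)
Your proposal is correct and follows essentially the same route as the paper: reduce to the scalar rational-weight result of Theorem~\ref{thm:TransNNI-psi-rational}, approximate each component $f_i$ to within $\varepsilon/\sqrt{m}$, and pass to the vector-valued bound via the Euclidean norm. You are in fact slightly more careful than the paper's own proof, which leaves implicit the concatenation and zero-padding of hidden units needed to put the $m$ separate component approximants into the shared-hidden-layer form \eqref{eq:NNphi-selfloop2}.
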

\begin{proof}
	Let the bias term $b\neq 0$ be given and let 
	$$
	\begin{aligned}
			\sum_{d, \operatorname{Q}}  \triangleq \Big\{ \sum_{i=1}^n &a_i\Psi(w, \eta^\TRANS x+b): n\in \BN,  \\
			& a_i \in \operatorname{Q}, w \in [0,1], ~\eta \in \BR^d \Big\}. 
	\end{aligned}
	$$ %
Since Theorem \ref{thm:TransNNI-psi-rational} holds, we only need to prove that the density of $\sum_{d, \operatorname{Q}}$ in $C(\BR^d)$ implies the density of the set of all functions  characterized by \eqref{eq:NNphi-selfloop2} in $C(\BR^d; \BR^m)$.  The proof is as follows.
Consider an arbitrary continuous function $g(\cdot):\BR^d \to \BR^m$. Then  $g(\cdot)$ can be represented  by 
$
g(x)  = [g_1(x),\cdots, g_m(x)]^\TRANS, ~ x\in \BR^d.
$
Since $\Sigma_{d, \operatorname{Q}}$ is dense in $C(\BR^d)$, 
for 
${\varepsilon >0}$, any compact subset of ${K\subseteq \BR^{d}}$ and any 
${g_i\in C(K)}$, there exists 
${F_i\in \Sigma_{d, \operatorname{Q}}}$ such that
$${\operatorname{ess}\sup_{x\in K}|F_i(x)-g_i(x)|<\frac{\varepsilon}{\sqrt{m}} }.$$
Let $F(x)= [F_1(x),..., F_m(x)]^\TRANS$. 
Then we obtain that
\[
\begin{aligned}
	\operatorname{ess}\sup_{x\in K} \|F(x)- g(x)\|^2 & = \operatorname{ess}\sup_{x\in K} \sum_{i=1}^m (F_i(x)-g_i(x))^2 
	\leq  \varepsilon^2. 
\end{aligned}
\]
This immediately implies $\operatorname{ess}\sup_{x\in K} \|F(x)- g(x)\|\leq \varepsilon  $. Since the choice of $K$ is arbitrary, we have the desired result.
\end{proof}
\begin{remark}
 Results in  
  Theorems \ref{thm:TransNNI-psi}-\ref{thm:TransNNIRm}   
 still hold if the TLogSigmoid function activation $\Psi(\cdot, \cdot)$ is replaced by TLogSigmoidPlus $\Psi_{+}(\cdot, \cdot)$ in \eqref{eq:PsiPlus} along with a positive bias term $b>0$,  or by TSoftAffine $\Phi(\cdot,\cdot)$ in \eqref{eq:phi-activation}. %
\end{remark}
\begin{remark}[Arbitrary Depth]
Let standard feedforward neural networks with $d$ input nodes, $m$ output nodes, and an arbitrary number of layers, each of which has $k$ nodes with activation function $\rho$ (which allow bias terms) be denoted by  $\mathcal{NN}_{d,m,k}^\rho$.  We observe that for any $w\in(0,1)$, the activation functions $\Psi(w,\cdot), \Psi_{+}(w,\cdot)$ and $\Phi(w,\cdot)$ are all continuous non-polynomial functions which are continuously differentiable  with nonzero derivative on a non-empty set of points.  
Applying the uniform approximation property for neural networks with arbitrary depth in 
	\cite[Proposition 4.9]{kidger2020universal}, we obtain that for any fixed $w\in (0,1)$ and for any compact $K\subset \BR^d$, the neural networks with arbitrary depth
$\mathcal{NN}_{d,m,d+m+1}^{\Psi(w,\cdot)}$, $\mathcal{NN}_{d,m,d+m+1}^{\Psi_+(w,\cdot)}$, and $\mathcal{NN}_{d,m,d+m+1}^{\Phi(w,\cdot)}$ are all dense in $C(K;\BR^m)$ with respect to the uniform norm.
\end{remark}
\section{continuous Time TransNN Models Give Rise to the Network SIS Model} \label{sec:netowrk-sis-via-TransNN}
%
 In this section we derive continuous time TransNNs from discrete time  virus spread models  in \eqref{eq:probablistic-dynamics} and \eqref{eq:population-model}  with extra assumptions on the transmission probability rate over the time duration. Based on the continuous time TransNNs, we derive the standard continuous time network SIS model in \cite{lajmanovich1976deterministic,van2008virus}.  
\subsection{Single-Particle Transmission Model}

 Assume the cross-node transmission probability is roughly linear in a small time duration $\Delta\geq 0 $, and assume the self-transmission rate may be exponential over the time duration;
more specifically,
\begin{equation}\label{ass:single-virus-ct}
	\begin{aligned}
	\textbf{Assumption:}\quad 
&w_{ij} =   c_{ij} \Delta + o(\Delta), \quad \forall i\neq j, \quad  \\
&   w_{ii}=e^{-c_{ii}\Delta}  = 1 - c_{ii}\Delta + o(\Delta), 
\end{aligned}
\end{equation}
 for all $i, j \in [n]$,  where $c_{ij}\geq 0$ is the basic transmission probability rate  (per unit time) from node $j$ to node $i$, and $c_{ii}\geq 0$ is the self-healing probability rate  (per unit time). 

 The exponential recovery rate is used in the first work on network SIS model \cite{lajmanovich1976deterministic}. The relation $w_{ii}= e^{-c_{ii}\Delta}$ has the properties that  when $\Delta =0$, $w_{ii}=1$ and moreover, in the long run as $\Delta \to +\infty$, the probability of transmitting to itself $w_{ii}$ is zero  (that is, nodes can heal themselves in the long run). However, it is not the only choice; we can use any other self-transmission rate that satisfies $w_{ii} =  1-c_{ii}\Delta + o(\Delta)$ and it leads to the same continuous time SIS network model (which will be derived later). 

Following the virus spread model in \eqref{eq:probablistic-dynamics} and, in particular, its equivalent TransNN representation in \eqref{eq:nn-individual-node}, we obtain 
\[
\begin{aligned}
	s_i(t+\Delta) = & \sum_{j=1, j\neq i}^n a_{ij} \Psi( c_{ij} \Delta + o(\Delta), s_j(t) )  \\
	& + \Psi(e^{-c_{ii}\Delta}, s_i (t)) .
\end{aligned}
\]
 The rate of the state variation over the time duration $\Delta$ satisfies
\[
\begin{aligned}
	&\frac{s_i(t+\Delta) - s_i(t)}{\Delta}\\
	& =   \frac{\sum_{j=1,j\neq i}^n a_{ij} \Psi( c_{ij} \Delta  + o(\Delta), s_j(k) ) }{\Delta}\\
	& \quad +\frac{\Psi(e^{-c_{ii}\Delta}, s_i (t)) -  s_i (t)}{\Delta}\\
	& = \frac{\sum_{j=1,j\neq i}^n a_{ij} \left( \frac{ 1-e^{-s_j(k)}}{e^{-\Psi({0}, s_j(k))}} c_{ij} \Delta + o(\Delta)\right)}{\Delta}\\
	& \quad + \frac{s_i(t) + \frac{1- e^{-s_i(k)}}{e^{-\Psi({1}, s_i(k))}} \cdot (-c_{ii}) e^{-c_{ii}\Delta} \cdot  \Delta +o(\Delta) -s_i(t)}{\Delta}
\end{aligned}
\]
where  the last step  uses the partial derivative of $\Psi(w, \cdot)$ with respect to its first element $w$ given by \eqref{eq:psi-1stgradient}.
Taking the small time limit $\Delta \to 0$ and utilizing that $
\lim_{w \to 0} \log(1-w+ w e^x) = 0  
$ yield  the associated \emph{Continuous Time TransNN Model with Single Particle Transmissions} under the assumptions in \eqref{ass:single-virus-ct}: 
\begin{equation}
\begin{aligned}
	\frac{ds_i(t)}{dt} \triangleq& \lim_{\Delta \to 0}  \frac{s_i(t+\Delta) - s_i(t)}{\Delta}  \\
	 = &  {\sum_{j=1,j\neq i}^n a_{ij}   c_{ij} { (1-e^{-s_j(t)})}  } - c_{ii}\frac{1-e^{-s_i(k)}}{e^{-s_i(k)}}.
\end{aligned}
\end{equation}
Let $q_i(t) \triangleq 1-p_i(t)$ which represents the probability of node $i$ being healthy. Then $s_i(t) = - \log q_i(t)$ and hence we obtain
\[
-\frac{d \log q_i (t)} {dt} =  {\sum_{j=1, j\neq i}^n a_{ij}   c_{ij} {({1-q_j(t)})}  } - (1-q_i(t)) \frac{c_{ii}}{ q_i(t)}.
\]   
Using $p_i(t) = 1-q_i(t)$ then yields 
\begin{equation}\label{eq:network-sis-1st}
	\frac{d p_i (t)} {dt} =   (1-p_i(t)){\sum_{j=1,j\neq i}^n a_{ij}   c_{ij} { {p_j(t)}} -{c_{ii} p_i(t) }}
\end{equation}
where $p_i(t)$ is the probability of infection at node $i$ at time $t$.

The equation \eqref{eq:network-sis-1st}  is essentially the network SIS models proposed in \cite{lajmanovich1976deterministic,van2008virus} (with slightly different interpretations of the connections, nodes and nodal states). 
It is worth highlighting that this new derivation of network  SIS model which starts from the virus spread model  in \eqref{eq:probablistic-dynamics}  
provides deeper fundamental understanding of virus spread models on networks.
Furthermore, specializing $c_{ij}$ to a single constant $c$  in \eqref{eq:network-sis-1st} yields a demonstration that the discrete time virus spread model  in \cite{chakrabarti2008epidemic} is consistent with the continuous time network SIS models in \cite{lajmanovich1976deterministic,van2008virus}. 
\begin{remark}[Network SI Model]
	We note that if $c_{ii}=0$, then the model is a network SI model where individuals once infected stay infected forever. Such a model is useful for modelling the spread of incurable infectious diseases. 
\end{remark}
\subsection{Multiple-Particle Transmission Model}
In this section, we derive the continuous time model for the virus spread dynamics with multiple particle transmissions  at each link in \eqref{eq:population-model}. 
We assume that
\begin{equation}\label{ass:pop-virus-ct}
\begin{aligned}  \textbf{Assumption:}\qquad 
	 & 
	a_{_{hq}} = \Delta^\varepsilon c_{_{hq}} + o(\Delta^\varepsilon), \quad \\
	 &  a_{_{hh}} = 1 - \Delta^\varepsilon c_{_{hh}} + o(\Delta^\varepsilon),\quad\\
	& w_{_{hq}} = \kappa_{_{hq}} \Delta^{1-\varepsilon} + o(\Delta^{1-\varepsilon}) , \quad  \\
	& w_{_{hh}} = 1  -\kappa_{_{hh}} \Delta^{1-\varepsilon} + o(\Delta^{1-\varepsilon}),
\end{aligned}
\end{equation}
 for some $\varepsilon\in[0,1]$ and for all $h,q \in [n]$, where $c_{_{hq}}\geq 0$ and $\kappa_{_{hq}}\geq 0$. Different $\varepsilon \in[0,1]$ may be chosen depending  on application contexts and interpretations of $a_{_{hq}}$ and $w_{_{hq}}$.  
Then applying these assumptions to the equivalent TransNN representation in \eqref{eq:pop-nn} of the virus spread model  with multiple transmission particles at each link in \eqref{eq:population-model}, we obtain
\begin{equation}
	\begin{aligned}
	&s_h({t+\Delta})  \\
	& = 	\sum_{q=1, q\neq h}^n (\Delta^\varepsilon   c_{_{hq}} + o(\Delta^\varepsilon )) \Psi(\Delta^{1-\varepsilon} \kappa_{_{hq}}+o(\Delta^{1-\varepsilon}) , s_q(t)) 	\\
	& \quad +  (1  - \Delta^\varepsilon  c_{_{hh}} + o(\Delta^\varepsilon ))  \Psi(1  -\kappa_{_{hh}} \Delta^{1-\varepsilon} + o(\Delta^{1-\varepsilon}) , s_q(t)) \\
	& = \sum_{q=1 , q\neq h}^n (\Delta^\varepsilon   c_{_{hq}} + o(\Delta^\varepsilon )) \Big[\Psi(0 , s_q(t))  	\\
	& \quad \quad + \frac{1-e^{-s_q(t)}}{e^{-\Psi({0}, s_q(t))}}  \kappa_{_{hq}} \Delta^{1-\varepsilon}  + o(\Delta^{1-\varepsilon})\Big]\\
	& \quad +  (1  - \Delta^\varepsilon  c_{_{hh}} + o(\Delta^\varepsilon ))    \Big[\Psi(1  , s_q(t)) \\
	& \qquad + \frac{1-e^{-s_q(t)}}{e^{-\Psi({1 }, s_q(t))}} (-\kappa_{_{hh}})\Delta^{1-\varepsilon}  + o(\Delta^{1-\varepsilon})\Big],
	\end{aligned} 
\end{equation}
where the last step  uses the partial derivative of $\Psi(\cdot, \cdot)$ with respect to its first element given by \eqref{eq:psi-1stgradient}. Using the property   \[
  \lim_{w \to 0} \Psi(w, x) = (-1) \lim_{w \to 0} \log(1-w+ w e^{-x}) = 0,
\]
and taking the small time limit $\Delta \to 0$, 
we  obtain
the associated \emph{Continuous Time TransNN Model with Multiple Particle Transmissions} under the assumptions in \eqref{ass:pop-virus-ct}:
\begin{equation}
\begin{aligned}
		\frac{d s_h(t)}{d t}		&= \sum_{q=1, q\neq h}^n c_{_{hq}} \kappa_{_{hq}} (1-{e^{-s_q(t)}}) -c_{_{hh}}\kappa_{_{hh}} \frac{1- e^{-s_h(t)}}{e^{-s_h(t)}} .
\end{aligned}
\end{equation}
Let $\rho_h(t) \triangleq 1- p_h(t)$. Then $s_h(t) = -\log \rho_h(t)$, and hence  
\[
-\frac{d \log \rho_h}{dt} = -c_{_{hh}}\kappa_{_{hh}} \frac{(1-\rho_h)}{\rho_h}+   \sum_{q=1, q\neq h}^n c_{_{hq}} \kappa_{_{hq}} ({1-\rho_q})
\]
 Using $p_h(t)=1- \rho_h(t) $  then yields
\begin{equation} \label{eq:mult-tran-netsis}
	\frac{d p_h}{dt}= -c_{_{hh}}\kappa_{_{hh}} p_h +  (1-p_h) \sum_{q=1, q\neq h}^n c_{_{hq}} \kappa_{_{hq}} p_q(t)
\end{equation}
where $c_{_{hq}}$ denotes the transport rate (of particles) per unit time from node $q$ to node $h$, $\kappa_{_{hq}}$ denotes the infection probability rate (per unit time) pre particle from node $q$ to node $h$, $\kappa_{_{hh}}$ denotes the self-healing rate (per unit time) per particle for node $h$.

{An interesting and important feature in the dynamics in \eqref{eq:mult-tran-netsis} is that $c_{_{hq}} $ and $\kappa_{_{hq}}$ are multiplied together, and clearly the quantity $c_{_{hq}}\kappa_{_{hq}} $ represents the expected number of effective particle transmissions (that cause the infection of node $h$) from node $q$ to node $h$, per unit time. }

\section{Conclusion}
This work connects virus spread models with their equivalent neural network representations. Based on this connection, we propose Transmission Neural Networks (TransNNs) as a new learning architecture and identifies a set of tunable or trainable activation functions (including TLogSigmoid, TLogSigmoidPlus, and TSoftAffine activation functions). Moreover, TransNNs also facilitate our new fundamental derivations of standard network SIS epidemic models. 
 Different choices of  the ``states" of the same system lead to different representation models for spread dynamics (which could be for neural networks, neuronal networks, epidemic networks and rumour spread networks, etc.); more specifically, choosing the negative-log-negative probability state (i.e. the Shannon information) gives rise to neural network models (i.e. TransNNs), and using the probability state leads to  virus spread models.  
 
This work opens up many interesting and important future directions, and we mention some in the following.

(a) It is important to develop a control theory or methodology for systems characterized by TransNNs as such a development naturally have important implications in controlling epidemic spreads and modifying system level properties of neural networks as learning and inference models.

(b) Since TransNNs relate neural networks to Markov models,  there is a potential connection  between the TransNN models and reinforcement learning (based on Markov decision processes with unknown dynamics and rewards), and such a connection shall be explored. 

(c) We should explore TransNNs with realizations of the probabilistic connections, which will result in the random network characterizations of TransNNs, and the associated properties and applications should be investigated. Furthermore, the probability state can be also characterized via the stochastic realizations of binary states similar to those in deep belief networks (see. e.g. \cite{neal1990learning,hinton2006fast}). The associated inference and learning problems shall be investigated. 

(d) Biological neuronal excitations on chemical synaptic networks seem naturally fit in TransNN models with multiple particle transmissions.  TransNN characterizations and experimentations for biological neuronal excitations taking into account of inhibitory neurons shall be explored. 

(e) The activation-level parameter enables  interesting characterizations of  activation intensity of individual neurons and individual connections.  Such heterogeneous activation levels may be potentially useful in improving the training performance and the robustness of TransNNs as learning models, and furthermore seem useful in representing neuronal models with neural modulations. Moreover, whether individual activations per link leads to better learning models is a research question that needs to be addressed thoroughly. 

(f) The tunable activation functions (e.g. TLogSigmoid, TLogSimoidPlus and TSoftAffine) with activation levels outside $[0,1]$ may lead to interesting activation functions as well.  In addition, these activation functions shall be integrated with all the existing neural network  models. For instance, recurrent neural networks with these activations, along with universal function approximator properties,  shall be investigated. %

(g) Deep learning tools and numerical methods are readily used to train TransNNs and hence their associated virus spread models. Thus such learning tools and methods,  together with TransNNs, should be explored to estimate and predict epidemic spreads  using real-world epidemic data. 

(h) Since the derivation of the standard network SIS models characterized by differential equations via TransNNs improves our fundamental understandings of epidemic models on networks,  the derivation of compartmental epidemic models on networks with more states  (such as SIR and SEIR) and extra features (such as location and age) shall be rigours formulated and analyzed following similar ideas in this paper. 
\section*{Acknowledgment}

The first author would like to thank Mi Lin for discussions and  valuable feedback on this work since its start.

 \appendices
\section{Proof of Theorem \ref{eq:threshold} } \label{sec:threshold-proof}

\begin{proof}
We note that \eqref{eq:probablistic-dynamics} and \eqref{eq:nn-individual-node} are equivalent models, and the mapping  $s_i(k) = -\log(1-{p_i}(k))$ from  the state $p_i(k)$ of \eqref{eq:probablistic-dynamics} to the state $s_i(k)$ of \eqref{eq:nn-individual-node} is a monotone bijection that satisfies $s_i(k)=0$ if and only if $p_i(k)=0$ for all $i\in[n]$ and for all $k\in \BN_0$. Hence, without loss of generality, we only analyze the stability of the model \eqref{eq:nn-individual-node}.  
We observe that $\Psi(w,x)$ is concave in $x\in [-\infty,+\infty]$, since its 2nd order derivative with respect to $x$ is non-positive (as shown later in \eqref{eq:2nd-Psi-grad-w}). 
Thus, for any  $w\in [0,1]$, 
\[
 \Psi(w, z)\leq \Psi(w, x)+\partial_x \Psi(w,x)(z-x),\quad \forall x, z\in [-\infty, +\infty]. 
\]
Applying this property to the virus spread model \eqref{eq:nn-individual-node} yields
\begin{equation*}
\begin{aligned}
		& s_{i}(k+1)\leq   \sum_{j=1}^n a_{ij} \left( \Psi( w_{ij},s_j^*)+ \partial_x \Psi( w_{ij},s_j^*)(s_j(k)-s^*_j)\right)
\end{aligned}
\end{equation*}
for any $s^*=[s^*_1,..., s_n^*] \in \BR^n$. Furthermore, it is easy to verify that the state $0 \in \BR^n$ (corresponding the state of no infection) is a  fixed point of the TransNN model \eqref{eq:nn-individual-node}. %
Taking $s^* = 0$ yields
\begin{equation}
		 s_{i}(k+1)\leq   \sum_{j=1}^n a_{ij}  w_{ij}s_j(k), \quad i \in [n].
\end{equation}
since $\Psi( w_{ij},0)=0$ and $\partial_x \Psi( w_{ij},0) =w_{ij}$ (as shown later in Section \ref{sec:Psi-Activation}). 
Applying the standard stability condition for discrete time linear systems, we obtain a sufficient condition for the system above to be globally exponentially stable:
$	\max_{i\in[n]}|{\lambda}_i(A\odot W)| <1 $.
\end{proof}

\section{TSoftAffine Activation Function $\Phi$}\label{sec:Phi-activation}
If we take the log-negative-probability as the state for that virus spread dynamics in \eqref{eq:probablistic-dynamics} and \eqref{eq:population-model}, that is, 
\[
\bar{s}_i = \log(1-p_i),  \quad \bar{s}_i \in [-\infty, 0],
\]
we obtain the same corresponding TransNNs in \eqref{eq:nn-individual-node} and \eqref{eq:pop-nn}, except with a different activation function $\Phi$ and different state ranges. More specifically, 
\begin{equation}
	\bar{s}_{i}(k+1) = \sum_{j=1}^n a_{ij} \Phi( w_{ij},\bar{s}_j(k)), \quad i \in [n], ~k \in \BN_0
\end{equation}
with $\bar{s}_i \in [-\infty, 0]$ for $i \in [n]$.
The  TSoftAffine activation function denoted by  $\Phi$ is the double reflections (vertically and horizontally) of TLogSigmoid function $\Psi(w,\cdot)$, that is,  $\Phi(w,\cdot) = -\Psi(w,-\cdot)$, 
and it is  explicitly given by
\begin{equation}
	\Phi(w, x) = \log \left(1- w + we^{x} \right), \quad w \in [0,1], ~ x\in \BR.
\end{equation}
See Fig. \ref{fig:activation-phi} for the shape of TSoftAffine activation function $\Phi$ with different activation levels. 
\subsection{Derivatives of TSoftAffine $\Phi(w,x)$ with Respect to $w$}
The partial derivative of $\Phi(w, x)$ with respect to $w \in [0,1]$ satisfies
\begin{equation} \label{eq:phi-1stgradient}
	\partial_w{\Phi({w}, x)}= \frac{e^x-1}{1- w +w e^x } = \frac{e^x-1}{e^{\Phi(w,x)}}
\end{equation}
for any $ x\in \BR$.
{The denominator $1+w(e^x -1)$ is non-zero for any $x\in(-\infty, +\infty]$ and any $w\in [0,1]$.   The only singular point of the derivative is when $w=1$, the derivative at $x=-\infty$ is infinite.}

Higher order partial derivatives of $\Phi$ with respect to $w \in(0,1)$ are given as follows: for $k\geq 1$ and $x\in \BR$,
\[
\begin{aligned}
	\partial^k_w{\Phi({w}, x)} & = (-1)^k(k-1)!\frac{(e^x-1)^k}{e^{k\Phi(w, x)}} \\
	& = (-1)^k(k-1)! \left(\partial_w{\Phi({w}, x)}\right)^k , ~w\in (0,1). 
\end{aligned}
\]
We note that ${\partial_w^k}{\Phi({w}, 0)}=0$ for all $w\in[0,1]$ and $k\geq 1$.
\subsection{Derivatives of  TSoftAffine $\Phi(w,x)$ with Respect to $x$} \label{sec:high-grad-phi}
The partial derivative of $\Phi(w,x)$ with respect to $x \in [-\infty,+\infty]$ satisfies
\begin{equation}\label{eq:grad-x-phi}
	\partial_x \Phi(w, x) = \frac{w e^{x}}{1 - w +w e^{x}} = \frac{w e^{x}}{e^{\Phi(w, x)}} 
\end{equation}
for any  $w \in[0,1]$.
The second order partial derivative of $\Phi(w,x)$ with respect to $x \in [-\infty, +\infty]$ is given by 
\begin{equation}\label{eq:phi-2grad-x}
	\partial^2_x\Phi(w,x) =  \partial_x \Phi(w, x)(1- \partial_x \Phi(w, x)),
\end{equation}
for any $w\in [0,1]$.
The property of $ \partial_x \Phi(w, x)$ in \eqref{eq:phi-2grad-x}  resembles the property of the sigmoid function $\sigma$  that $\sigma^\prime = \sigma(1-\sigma)$. 
Adapting the analysis in  \cite{minai1993derivatives} for  the sigmoid function $\sigma$ to $ \partial_x \Phi(w, x)$, 
we obtain higher order partial derivatives of $\Phi(w,x)$ with respect to $x \in[-\infty, +\infty]$  as follows:  for  $~n \geq 2$, 
\begin{equation}\label{eq:high-grad-phi}
	{\partial^n_x }\Phi(w,x) =\sum_{k=1}^{n} (-1)^{k-1} (k-1)! S_{n, k}(\partial_x \Phi(w, x))^k
\end{equation}
for any $w\in[0,1]$
where $S_{n,k}$  denotes the Stirling numbers of the second kind (see e.g. \cite[Chapter 6.1]{graham1989concrete}) and
	$\partial_x \Phi(w, x) $ is given by \eqref{eq:phi-1stgradient}.

By the relation $\Psi(w,x)= -\Phi(w, -x)$ between TLogSimoid and TSoftAffine given in \eqref{eq:phi-activation}, 
we obtain that for $n\geq 1$, $w\in[0,1]$,
\begin{equation*}
	\begin{aligned}
		{\partial^n_x }\Psi(w,z) & = (-1)^{n+1}{\partial^n_x }\Phi(w,-z), \quad  z\in [-\infty +\infty].
		 	\end{aligned}
\end{equation*}%
This together with \eqref{eq:high-grad-phi} implies that higher order partial derivatives of TLogSigmoid function $\Psi(w,x)$ with respect to $x \in[-\infty, +\infty]$ are given by
\begin{equation*}
	\begin{aligned}
		&\partial^n_x \Psi(w,z)   = \sum_{k=1}^{n} (-1)^{k+n} (k-1)! S_{n, k}(\partial_x \Phi(w, {-z}))^k \\
		& = \sum_{k=1}^{n} (-1)^{k+n} (k-1)! S_{n, k}(\partial_x \Psi(w, {z}))^k, \quad z \in [-\infty, +\infty], 
	\end{aligned}
\end{equation*}
where the last equality is due to $\partial_x \Psi(w,z) = \partial_x \Phi(w,{-z})$. %

\subsection{Relations with Sigmoid, Tanh and Softplus Functions}

Interestingly,  TSoftAffine activation function is related to the Softplus function, with ${ \text{Softplus}(x)=\ln(1+e^{x}) }$,  that was first derived as the primitive function of the sigmoid function in \cite{dugas2001incorporating}.  More specifically,  $\Phi(0.5, \cdot)$ give rise to the Softplus function with an offset $\log(0.5)$, that is, for any $x\in \BR$,  
\begin{equation}
	\Phi(0.5, x) = \log(0.5 e^{x}+0.5) = \log(0.5)+ \log(1+e^x). 
\end{equation}

Moreover, the two partial derivatives of TSoftAffine $\Phi(\cdot, \cdot)$ are related to the sigmoid function $ \sigma(x)=\frac{1}{1+e^{-x}} $ and the hyperbolic tangent function $\tanh(x) = \frac{e^{x}-e^{-x}}{e^{x}+e^{-x}}$  as follows: for any $x\in \BR$,  
\begin{equation}
\begin{aligned}
		& \partial_x \Phi (0.5, x) = \sigma(x) ~~ \text{and}~~
		 \partial_w \Phi(0.5, x ) = 2\tanh (0.5 x).
\end{aligned}
\end{equation}
An important feature of $\partial_x \Phi (w, x)$ is as follows: 
$$\partial_x\Phi(w, 0)=w, ~ \partial_x\Phi(-\infty, 0)=0  ~ ~\text{and}~~ \partial_x\Phi(+\infty, 0)=1.$$ Therefore, we see that $\partial_x\Phi(w, \cdot)$ is actually a tunable sigmoid function (TSigmoid) with threshold value $w$ (see Fig. \ref{fig:grad-x-phi}). 
\begin{figure}[htb]
\centering
	\includegraphics[width=8.5cm]{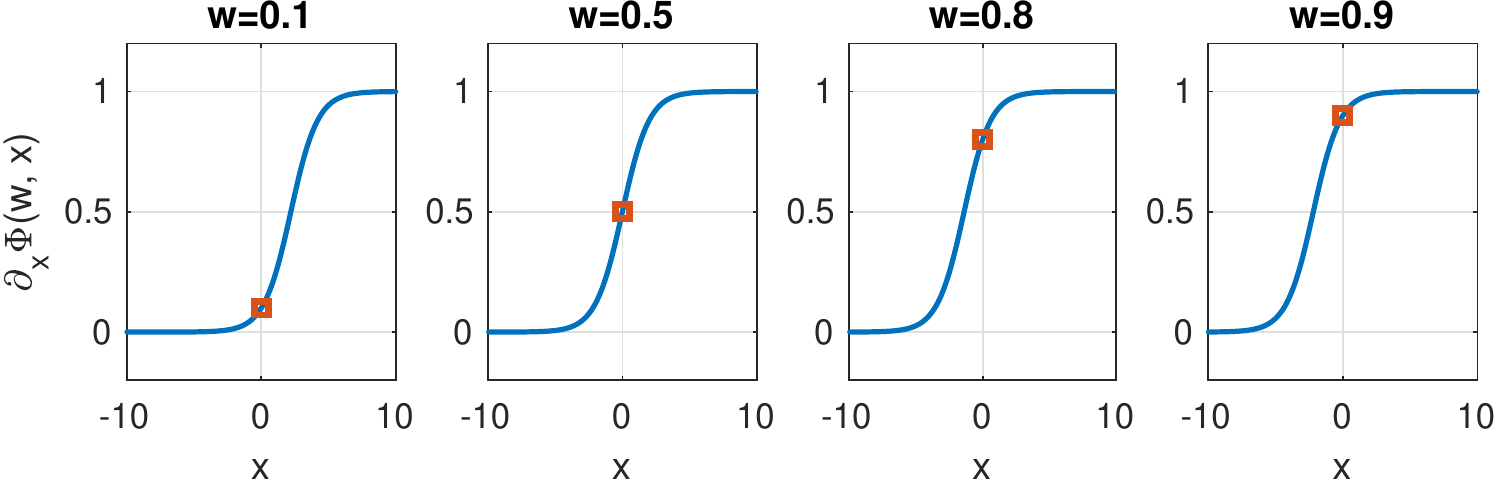}
	\caption{Illustration of $\partial_x\Phi(w, \cdot)$ defined in \eqref{eq:grad-x-phi} as a tunable sigmoid function with threshold value $w$.  The intersection value  of $\partial_x\Phi(w, \cdot)$ with the vertical axis is $w$ as highlighted by red squares.} \label{fig:grad-x-phi}
\end{figure}
\bibliographystyle{IEEEtran}
\bibliography{mybib}
\begin{IEEEbiography}{Shuang Gao}(M'19) received the B.E. degree in automation  and M.S. in control science and engineering, from Harbin Institute of Technology, Harbin, China, respectively in 2011 and 2013. He received the PhD degree in electrical engineering (with specialization in system and control) from McGill University, Montreal, QC, Canada, in 2019, under the supervision of  Peter. E. Caines. 
He is currently a Postdoctoral Researcher associated with the Department of Electrical and Computer Engineering at McGill University, the McGill Centre for Intelligent Machines, and Groupe d'\'Etudes et de Recherche en Analyse des D\'ecisions in Montreal, Canada. He is also a visiting scholar in the School of Mathematics and Statistics at Carleton University hosted by  Minyi Huang since December 2021 and an incoming research fellow at the Simons Institute for the Theory of Computing at UC Berkeley starting from August 15, 2022. 
His current research interests include control, game and learning theories for large networks, and their applications in social networks, epidemic networks,  renewable energy grid and neuronal networks.

\end{IEEEbiography}
\begin{IEEEbiography}{Peter E. Caines} (LF'11)
received the BA in mathematics from Oxford University in 1967 and the PhD in systems and control theory in 1970 from Imperial College, University of London, under the supervision of David Q. Mayne, FRS. After periods as a postdoctoral researcher and faculty member at UMIST, Stanford, UC Berkeley, Toronto and Harvard, he joined McGill University, Montreal, in 1980, where he is Distinguished James McGill Professor and Macdonald Chair in the Department of Electrical and Computer Engineering. In 2000 the adaptive control paper he coauthored with G. C. Goodwin and P. J. Ramadge (IEEE Transactions on Automatic Control, 1980) was recognized by the IEEE Control Systems Society as one of the 25 seminal control theory papers of the 20th century. He is a Life Fellow of the IEEE, and a Fellow of SIAM, IFAC, the Institute of Mathematics and its Applications (UK) and the Canadian Institute for Advanced Research and is a member of Professional Engineers Ontario. He was elected to the Royal Society of Canada in 2003. In 2009 he received the IEEE Control Systems Society Bode Lecture Prize and in 2012 a Queen Elizabeth II Diamond Jubilee Medal. Peter Caines is the author of Linear Stochastic Systems, John Wiley, 1988, republished as a SIAM Classic in 2018, and is a Senior Editor of Nonlinear Analysis-Hybrid Systems; his research interests include stochastic, mean field game, decentralized and hybrid systems theory,  together with their applications in a range of fields.
\end{IEEEbiography}

\end{document}